\DeclareMathOperator*{\argmax}{arg\,max}
\DeclareMathOperator*{\argmin}{arg\,min}
\newcommand{\KL}{\mathrm{KL}}
\newcommand{\ML}{\mathrm{ML}}
\newcommand{\pess}{\mathrm{PASTA}}
\newcommand{\cO}{\mathcal{O}}
\newcommand{\cD}{\mathcal{D}}
\newcommand{\cR}{\mathcal{R}}
\newcommand{\cV}{\mathcal{V}}
\newcommand{\cF}{\mathcal{F}}
\newcommand{\cN}{\mathcal{N}}
\newcommand{\bx}{\bm{x}}
\newcommand{\bbE}{\mathbb{E}}
\newcommand{\bbR}{\mathbb{R}}
\newcommand{\bbS}{\mathbb{S}}
\newcommand{\bbP}{\mathbb{P}}
\newcommand{\bbQ}{\mathbb{Q}}
\newcommand{\rd}{\mathrm{d}}
\newcommand{\sfL}{\mathsf{L}}
\theoremstyle{plain}
\newtheorem{theorem}{Theorem}[section]
\newtheorem{lemma}[theorem]{Lemma}
\theoremstyle{definition}
\newtheorem{definition}{Definition}[section]
\newtheorem{assumption}{Assumption}[section]
\theoremstyle{remark}
\newcommand\independent{\protect\mathpalette{\protect\independenT}{\perp}}
\def\independenT#1#2{\mathrel{\rlap{$#1#2$}\mkern2mu{#1#2}}}	
\icmltitlerunning{Pessimistic Assortment Optimization}
\begin{document}

\onecolumn{
\icmltitle{PASTA: Pessimistic Assortment Optimization}



\icmlsetsymbol{equal}{*}

\begin{icmlauthorlist}
\icmlauthor{Juncheng Dong}{equal,jd}
\icmlauthor{Weibin Mo}{equal,yyy}
\icmlauthor{Zhengling Qi}{comp}
\icmlauthor{Cong Shi}{sch}
\icmlauthor{Ethan X. Fang}{aura}
\icmlauthor{Vahid Tarokh}{jd}
\end{icmlauthorlist}

\icmlaffiliation{jd}{Department of Electrical and Computer Engineering, Duke University, Durham, NC 27705, United States.}
\icmlaffiliation{yyy}{Krannert School of Management, Purdue University, West Lafayette, Indiana}
\icmlaffiliation{comp}{Department of Decision Sciences, George Washington University, Washington D.C.}
\icmlaffiliation{sch}{Department of Industrial \& Operations Engineering
University of Michigan, Ann Arbor, Michigan}
\icmlaffiliation{aura}{Department of Department of Biostatistics and Bioinformatics, Duke University, Durham, NC 27705, United States.}

\icmlcorrespondingauthor{Zhengling Qi}{qizhengling@gwu.edu}


\vskip 0.3in
}



\printAffiliationsAndNotice{\icmlEqualContribution} 

\begin{abstract}
We consider a class of assortment optimization problems in an \emph{offline} data-driven setting. A firm does not know 
the underlying customer choice model but has access to an offline dataset consisting of the historically offered assortment set, customer choice, and revenue. The objective is to use the offline dataset to find an optimal assortment. Due to the combinatorial nature of assortment optimization, the problem of insufficient data coverage is likely to occur in the offline dataset. Therefore, designing a provably efficient offline learning algorithm becomes a significant challenge. To this end,  we propose an algorithm  referred to as \emph{Pessimistic ASsortment opTimizAtion} (PASTA for short) designed based on the principle of pessimism, that can correctly identify the optimal assortment by only requiring the offline data to cover the optimal assortment under general settings. In particular, we establish a regret bound for the offline assortment optimization problem under the celebrated multinomial logit model. We also propose an efficient computational procedure to solve our pessimistic assortment optimization problem. Numerical studies demonstrate the superiority of the proposed method over the existing baseline method. 
\end{abstract}

\section{Introduction}\label{sec:intro}
One of the most critical problems faced by a seller is to select products for presentation to potential buyers. Often faced with limited display spaces and storage costs in both brick-and-mortar and online retailing, the seller needs to carefully choose a set of products from the vast collection of all available products for displaying to its customers. In this line, the problem of selecting an \textit{assortment}, i.e., a collection of products from all available products, in order to maximize the seller's revenue is the \textit{assortment optimization problem}. Obviously, the choice behavior of customers \citep{mcfadden1981econometric} is of great importance in the problem of assortment optimization. Without loss of generality, we assume the choice of each customer can be described by a preference vector $\theta^*$. This subsumes the seminal multinomial logit (MNL) model \citep{mcfadden1973conditional} which is arguably the most well-studied and widespread models in assortment optimization literature (Please see Section~\ref{sec:mnl} for more details)~\citep{AO_MNL,DAO,AO_card, AO_LP,DAO_nest_MNL,aouad2022ast_mnl}. 

In practice, $\theta^*$ is often unknown and needs to be estimated. Assuming no historical data of customers, \textit{dynamic assortment optimization} adaptively learns $\theta^*$ in a trial-and-error fashion by updating the assortment and observing the subsequent choices of customers sequentially~\citep{DAO,chen2020dynamic,rusmevichientong2020dynamic,chen2021optimal_dynamic,li2022online}. Meanwhile, in our era of Big Data, companies often collect abundant customer data. Therefore, it is often in companies' best interest to learn from the existing (potentially massive) offline datasets rather than starting from scratch. Moreover, offline learning is beneficial since online exploration can sometimes be expensive or infeasible. Hence, we take the first stab to formally study the following important question faced by every seller.

{\bf Research Question:} \emph{Given a pre-collected offline dataset of historically offered assortment, customers choices, and revenue, how can we find an efficient and theoretically justified offline algorithm to estimate the optimal assortment set without unrealistic assumptions on the offline dataset?}

When the dataset is not adaptively collected, it is not uncommon to encounter the challenge of insufficient coverage of data. For estimators such as the maximum likelihood estimator (MLE) to approximate $\theta^*$ accurately, the offline dataset must include sufficiently many assortments and customer choices. In other words, the data-collecting process needs to sufficiently explore different assortments (by the seller) and different choices (by the customers). This is unlikely to happen for offline datasets because the seller would not choose unreasonable assortments whose expected revenues are obviously suboptimal, and the customers would not choose products against their preferences. 

{\bf Major Contributions.} The main contribution of this work is two-fold. First, based on the principle of pessimism, we propose the \textit{Pessimistic ASsortment opTimizAtion} (PASTA for short) framework, which correctly identifies the optimal assortment. In particular, our framework only requires that the offline dataset covers the optimal assortment set instead of all possible (combinatorially many) assortment sets. Second, we derive the first finite-sample regret bound for offline assortment optimization under the multinomial logit (MNL) model (Please see Section~\ref{sec:mnl}), one of the most widely used models for modeling customers' choices. We subsequently propose an algorithm, also with the name \textit{PASTA}, that can efficiently solve the pessimistic assortment optimization problems. Experiments on the simulated datasets (so that $\theta^*$ is known) corroborate the efficacy of pessimistic assortment optimization. 

{\bf Paper Organization.} We briefly review the related work in Section~\ref{sec:related_work} and the preliminaries in Section~\ref{sec:pre}. We propose the pessimistic assortment optimization in Section~\ref{sec:pess}. In Section~\ref{sec:theory}, we present the theoretical results. In Section~\ref{sec:mnl}, we study pessimistic assortment optimization under the MNL model as a concrete example. In Section~\ref{sec:algo}, we propose an algorithm that can solve the problem efficiently. We provide experimental results in Section~\ref{sec:exp}, after which we conclude. 

\section{Related Work}\label{sec:related_work}
\noindent{\bf Assortment Optimization.} 
The assortment optimization problem under the MNL model without any constraints was first studied in \cite{AO_MNL}. Then more complicated assortment optimization problems under various types of constraints, including space requirement~\citep{AO_space} and cardinality~\citep{AO_card}, were considered. \cite{AO_LP} proposed a linear programming (LP) formulation of the assortment optimization problem that includes several previous works as special cases corresponding to different constraints in the formulation of LP. This line of work assumes that the true parameters of the customer models are known (or at least can be accurately estimated)~\citep{ass_opt_2014,ass_opt_2015,ass_opt_2019,ass_opt_2020,liu2020ass_opt,ass_opt_2021}. Another closely related line of work is \textit{dynamic assortment optimization}~\citep{DAO}. In the setting of dynamic assortment optimization, the seller without any prior information about the customers, has finite selling horizons in which it observes the choices of customers and, based on the observed behaviors, optimize their assortments in an adaptive, trial-and-error fashion~\citep{DAO_Card,DAO_near_optimal,DAO_nest_MNL,rusmevichientong2020dynamic,chen2020dynamic,chen2021optimal_dynamic}. In comparison with the online setting used in dynamic assortment optimization, our work departs from the existing literature by focusing on the offline setting where the seller only has collected datasets but not any control on the data-collecting process. 

\noindent{\bf Pessimism in Offline Learning.} The principle of pessimism has been successfully used in reinforcement learning (RL) for finding an optimal policy with pre-collected datasets. On the empirical side, it has helped with improving the performance of both the model-based approach and value-based approach in offline setting~\citep[e.g.,][]{model_based_RL_pess_2020,model_based_RL_pess_2_2020,value_based_RL_pess_2020}. The importance of pessimism has been analyzed and verified theoretically in the setting of RL~\citep{theory_RL_pess_3_2021,fu2022offline}. Our work main contribution is to take a pessimistic approach to assortment optimization problems and demonstrate its empirical and theoretical values. Moreover, our work differs from the above works by focusing on a  decision-making problem with exponentially many choices.

\section{Preliminary}\label{sec:pre}
Let $[N]\doteq\{1,2,..,N\}$ denote the set of $N$ distinct items. For each item $i$, a feature vector $x_{i} \in \mathbb{R}^d$ is available. Assume that $\{x_{i}\}_{i \in [N]}$ are fixed vectors.
Denote the collection of all possible assortments under \textit{consideration} by $\mathbb{S} \subseteq 2^{[N]}\setminus\{\emptyset\}$. 
For the offline data, we define a random vector $(S, A, R)$ from each customer, where $S \subseteq [N]$ denotes an assortment presented to the customer, $A \in S \cup \{0\}$ denotes the item purchased by the customer for $A \in S$ ($A = 0$ where no purchase is made), and $R$ denotes the corresponding revenue. The ultimate goal of assortment optimization is to find an optimal set of items $s^\ast \in \bbS$ for all customers to maximize the expected revenue. A specific goal of this work is to study how to leverage the offline data, which consists of i.i.d. samples of the random triplet $(S, A, R)$ in order to learn an optimal assortment.

For the assortment optimization with offline data, a fundamental question is to estimate the expected revenue for an unexplored assortment $s \in \bbS$. This amounts to addressing the causal relationship between assortment and revenue. 
Under the celebrated potential outcome framework \citep{rubin1974estimating}, let the random variable $R(s)$ be the potential revenue 
under an intervention that the assortment is set to be $s \in \mathbb{S}$.
Our goal is to find an optimal assortment 
\[s^\ast \in \argmax_{s \in \mathbb{S}}\, \mathbb{E}[R(s)].\]
Note that the expected potential revenue $\mathbb{E}[R(s)]$ defined in the counterfactual world may not be identifiable from the observed data without additional assumptions. Throughout this paper, we make the following standard consistency and un-confoundedness assumptions in causal inference.

\begin{assumption}\label{ass: consistency}[{\sc Consistency}] 
With probability one, the observed revenue coincides with the potential revenue of the observed assortment. That is, $R = R(S)$ almost surely.
\end{assumption}
\vspace{-0.3cm}
\begin{assumption}\label{ass: unconfoundedness}[{\sc Un-confoundedness}] The potential revenues are independent variables of the observed assortment, i.e., $\{R(s)\}_{s \in \bbS} \independent S$.
\end{assumption}
Assumption \ref{ass: consistency} ensures that the observed revenue is consistent with the potential revenue of purchasing item $A$ $(\neq 0)$, or no purchase if $A = 0$, under the observed assortment $S$. Assumption \ref{ass: unconfoundedness} rules out possible unobserved factors that could confound the causal effect of assortment on revenue\footnote{
    With the observed features $\{x_{j}\}_{j \in [N]}$, it can be possible to relax Assumption \ref{ass: unconfoundedness} to a more plausible condition: the independence holds conditional on the observed features. However, for notation simplicity, without loss of generality, we consider Assumption \ref{ass: unconfoundedness}.
}.

Denote $ \pi_{S}(s) \doteq \bbP(S=s) $ as the probability of observing assortment $s$ in the offline data. To non-parametrically identify $\bbE[R(s)]$ for every $s \in \mathbb{S}$,  we  further require the following positivity assumption \citep{imbens2015causal}. 

\begin{assumption}\label{ass: positivity}[{\sc Positivity Everywhere}]
For all $s \in \bbS$,  the probability $\pi(s)$ of observing assortment $s$ is positive (i.e. $\pi(s) > 0$).
\end{assumption}
Assumption \ref{ass: positivity} requires that every assortment can  be observed with a positive chance in the offline data. \emph{This is a strong assumption that will be later relaxed it Assumption \ref{ass: technical assumptions} (I), i.e., requiring positivity only at optimum.} With Assumptions \ref{ass: consistency}--\ref{ass: positivity}, we can identify the effect of an assortment set via inverse propensity score weighting \citep{rosenbaum1983central}: for any $s \in \mathbb{S}$,
\begin{align}
    \bbE[R(s)] =  \bbE\left\{\frac{\mathbb I(S = s)R}{\pi_S(s)}\right\},
    \label{eq:IPW}
\end{align}
where the expectation in the right-hand-side is taken with respect to the data distribution of $(S, A, R)$. However, when the number of possible assortments $|\bbS|$ grows exponentially in $N$, 
Assumption \ref{ass: positivity} rarely holds for all $s \in \bbS$ in practice, given potentially limited offline data particularly when $N$ is large. Moreover, when an assortment $s$ corresponds to an inferior expected revenue $\bbE[R(s)]$, it may not be considered by the seller at all. As a consequence, the probability of observing such an assortment $\pi_{S}(s)$ is zero. These may prevent us from estimating \eqref{eq:IPW} for every assortment $s \in \bbS$. 

We may tempt to use the following identification strategy:
\begin{align}
        &~ \bbE[R(s)] = \bbE(R|S=s) \quad (\text{by Assumptions \ref{ass: consistency}-\ref{ass: unconfoundedness}}) \nonumber \\
        &\ = \bbE[\bbE(R|S=s,A)] = \sum_{i \in s \cup \{0\}} \pi_{A}(i|s;\bx)r_{s,i},
    \label{eq:id}
\end{align}
where $\bx \doteq \{x_{j}\}_{j \in [N]}$ are the features across items, $\pi_{A}(i|s;\bx) \doteq \bbP(A=i|S=s)$ is the customer's choice probability \citep{mcfadden1973conditional} of purchasing the $i$-th item given an assortment $s$,  $r_{s,i} \doteq \bbE(R|S=s,A=i)$ is the conditional expected revenue given the assortment $s$ with the $i$-th item being purchased. For ease of notation, we omit the features $\bx$ in $\pi_{A}$ when there is no confusion.
Identifying $\bbE[R(s)]$ as above requires the knowledge of $\pi_{A}(i|s)$ and $r_{s,i}$, which can be learned from data. Although such an identification approach does not explicitly depend on $\pi_{S}(s)$, full identification of $\pi_{A}(i|s)$ and $r_{s,i}$ requires the positivity of $\pi_{S}(s)$ for every $s \in \mathbb S$ as assumed above.

Despite the aforementioned challenge of insufficient coverage over assortments, we argue that finding an optimal assortment $s^{\ast}$ may not necessarily require $\pi_{S}(s) > 0$ everywhere but only at the optimal assortment $s^\ast$. In particular, based on \eqref{eq:id}, when computing
\begin{align}
    s^{\ast} \in \argmax_{s \in \bbS}\sum_{i \in s \cup \{0\}}\pi_{A}(i|s)r_{s,i}, 
    \label{eqn: optimality condition}
\end{align}
we may not necessarily need to estimate $\pi_{A}(i|s)$ and $r_{s,i}$ well for $s \neq s^\ast$, as long as \emph{sub-optimal assortments can be safely ruled out during the optimization}. Our insight is that the estimation of $\pi_{A}(i|s)$ and $r_{s,i}$ for the less seen assortment $s$ in the data often incurs large errors. Deploying pessimism by taking the estimation error into consideration can rule out those assortments \cite{theory_RL_pess_3_2021}, while standard predict-then-optimize \citep{bertsimas2020predictive} or empirical maximization approaches \citep{zhao2012estimating} may suffer from an overestimation of $\mathbb E[R(s)]$. Hence, in our proposed pessimistic assortment optimization framework, we only require the positivity at optimum $\pi_S(s^\ast) > 0$, which is a much weaker assumption than that of Assumption \ref{ass: positivity}.

In this paper, we focus on handling the estimation error from $\pi_{A}(i|s)$ while assuming that $r_{s,i}$ is known. This is a typical assumption in the literature of assortment optimization~\citep{AO_MNL,AO_LP,ass_opt_2019,ass_opt_2021}. Our framework can be naturally extended to the scenario where we need to estimate $r_{s,i}$'s. For optimization tractability, we further assume that $r_{s,i} = r_{i}$ that the expected revenue depends only on the purchased item but not on the underlying assortment. This assumption is reasonable in many applications where the revenue is a deterministic consequence of a purchased item. This can also be easily extended under our pessimism framework but could result in a more complicated assortment optimization problem.

Below, without loss of generality, we assume that $ r_{i} \ge 0 $ for $i \in [N] $, while $r_{0} = 0$ (no purchase incurs zero revenue). For any vector $x$, let $x^\top$  and $||x||_2$ respectively denote the transpose and $\ell_2$-norm of $x$. For any set $A$, let $|A|$ denote the cardinal number of $A$. For any two sequences $\{\varpi(n)\}_{n \geq 1}$ and $\{\gamma(n)\}_{n \geq 1}$, we write $\varpi(n) \gtrsim  \gamma(n)$ (resp. $\varpi(N) \lesssim \gamma(n)$) whenever there exist constants $c_1 > 0$
(respectively $c_2 > 0$ such that $\varpi(n) \geq c_1 \gamma(n)$ (resp. $\varpi(n) \leq c_2 \gamma(n)$). Moreover, we write $\varpi(n) \eqsim  \gamma(n)$ whenever $\varpi(n) \gtrsim  \gamma(n)$ and $\varpi(n) \lesssim  \gamma(n)$.

\section{Pessimistic Offline Assortment Optimization}\label{sec:pess}
In this section, we introduce our pessimistic offline assortment optimization framework. To this end, based on Eq.~\eqref{eq:id}, we first estimate the choice probability $\pi_A(i|s)$ from offline data. Subsequently we calculate optimizing values in optimization problem ~\eqref{eqn: optimality condition} using a plug-in estimator of $\pi_{A}(\cdot)$. 
Consider a generic form of model $\pi_{A}(a|s;\theta^{\ast},\bx)$ with the unknown true  parameter~$\theta^{\ast}$. Again, for ease of notation, we omit the features $\bx$ in $\pi_{A}$ when there is no confusion. We remark that $\theta^{\ast}$ could be either finite-dimensional or infinite-dimensional. 
Given an offline dataset $\cD = \{S_i, A_i, R_{i}\}_{i=1}^{n}$, where $n$ is the sample size, one can estimate the model parameter $\theta^\ast$ via maximum likelihood estimator (MLE). Specifically, define the likelihood-based loss function $\widehat L_n(\theta)$ as
\[ \widehat L_n(\theta) = -\frac{1}{n}\sum_{i=1}^{n}\log\pi_{A}(A_i|S_i;\theta). \]
Then the MLE of the unknown parameter $\theta^*$ is
$ \widehat{\theta}_{\ML,n} \in \argmin_{\theta \in \Theta}\widehat{L}_{n}(\theta) $,
where $\Theta$ is a pre-specified parameter space. Let
\begin{equation*}
\cV(s;\widehat{\theta}_{\ML,n}) \doteq \sum_{i\in s} \pi_A(i|s; \widehat{\theta}_{\ML,n})r_{i}.
\end{equation*}
Here, we define $\cV(s;\theta^\ast)$ as the \textit{value function} of $s$ with the customer choice model for $\pi_{A}$ depending on the parameter $\theta^\ast$. The plug-in estimator of the optimal assortment based on \eqref{eqn: optimality condition} is 
\[ \widehat s_{\ML,n} \in \argmax_{s \in \bbS}\left\{\cV(s;\widehat{\theta}_{\ML,n})\right\}. \]
 The MLE-based approach first plugs in the MLE of $\theta^\ast$, and then directly optimizes the corresponding estimated value function. 



As discussed before, a disadvantage of the above estimate-then-optimize approach is that the estimation error of $\widehat{\theta}_{\ML,n}$ caused by insufficient data coverage may result in the overestimation of $\cV(s;\theta^\ast)$, which will propagate to downstream optimization. Alternatively, we can quantify the estimation uncertainty by considering the following likelihood-ratio-test-based confidence region \citep{owen1990empirical}:
\[ \Omega_{n}(\alpha_{n}) \doteq \{ \theta \in \Theta: \widehat{L}_{n}(\theta) - \widehat{L}_{n}(\widehat{\theta}_{\ML,n}) \leq \alpha_{n}\}, \]
where $\alpha_{n} >0$ is pre-specified. Later we analyze the MNL model as a special case (Please see Section~\ref{sec:mnl}). With $\alpha_{n}$ chosen as $\cO(d/n)$, we establish in Theorem \ref{thm:empirical_likelihood_bound} that $\theta^{*} \in \Omega_{n}(\alpha_{n})$ with high probability. Such a guarantee does not require any data coverage assumption on assortments.

For now on, for simplicity, we drop $\alpha_{n}$ and write $\Omega_{n}$ for $\Omega_{n}(\alpha_{n})$ when there is no ambiguity.

In order to robustify assortment optimization against plug-in estimation errors, we consider a pessimistic version of \eqref{eqn: optimality condition} by taking the estimation uncertainty from $\Omega_{n}$ into account.
Specifically, we propose the \textbf{Pessimistic ASsortment opTimizAtion (PASTA)} by solving
\begin{equation}\label{eqn: pessimistic algorithm}
    \widehat s_{\pess,n} \in \argmax_{s \in \bbS} \min_{\theta \in \Omega_n} \mathcal{V}(s;\theta).
\end{equation}
Here, for a fixed assortment $s \in \bbS$, the inner layer of minimization computes the worst-case value among all possible model parameters $\theta$ within the confidence set $\Omega_{n}$. In particular, if the estimated value $\cV(s;\widehat{\theta}_{\ML,n})$ for $s$ is highly uncertain due to insufficient data coverage,  the worst-case value $\min_{\theta \in \Omega_{n}}\cV(s;\theta)$ is likely much smaller than $\cV(s;\theta^\ast)$. In that case, the outer layer of \eqref{eqn: pessimistic algorithm} may prefer another assortment with a relatively higher worst-case value.
In this way, the inner layer of \eqref{eqn: pessimistic algorithm} rules out those assortments with less frequency in the offline data. Hence, one essential advantage of such a strategy is that it avoids an overestimation of the value function. In other words, by the plug-in approach, with a non-negligible chance, the estimated value $\cV(s;\widehat{\theta}_{\ML,n})$ can be much larger than the truth $\cV(s;\theta^*)$, which  further leads to a possibly sub-optimal assortment but optimized by the MLE-based approach. In contrast, PASTA is aware of insufficient data coverage, and hence more pessimistic about those highly uncertain value estimates. In the next section, we theoretically analyze the advantage of the PASTA approach.


\section{Theoretical Results}\label{sec:theory}
In this section, we show that the PASTA method \eqref{eqn: pessimistic algorithm} enjoys a generic regret guarantee under a weak assumption of \emph{positivity at optimum} that is $\pi_{S}(s^\ast) > 0$. Specifically, given $\widehat s_{\pess,n}$ in~\eqref{eqn: pessimistic algorithm}, we adopt the following regret as the performance metric to evaluate the PASTA's performance
\[ \cR(\widehat{s}_{\pess,n}) = \cV(s^{*};\theta^{*}) - \cV(\widehat{s}_{\pess,n};\theta^{*}). \] 
We aim to derive a regret bound for $\cR(\widehat{s}_{\pess,n})$ under generic conditions. Denote $L(\theta) = \bbE[-\log\pi_{A}(A|S;\theta)]$ as the population loss function. All detailed proofs can be found in the Appendix~\ref{sec:appendix}.

We first show that whenever $\theta^{*} \in \Omega_{n}$ that the confidence region covers the true parameter, the regret of the PASTA method can be calibrated by the worst-case estimation error among $\theta \in \Omega_{n}$ of the value function at the optimal assortment $s^\ast$.

\begin{lemma} \label{lemma:general_pasta_regret}
Let $\widehat{s}_{\pess,n}$ be the solution by the PASTA method defined in \eqref{eqn: pessimistic algorithm}. 
If $\theta^* \in \Omega_n$, then
\[ \cR(\widehat{s}_{\pess,n}) \le \max_{\theta \in \Omega_n} \big\{\cV(s^{*};\theta^{*}) -  \cV(s^{*};\theta)\big\}. \]
\end{lemma}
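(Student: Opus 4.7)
The plan is to run the standard pessimism decomposition argument: add and subtract the pessimistic value at $\widehat{s}_{\pess,n}$ to split the regret into a suboptimality term and a coverage-error term, then control each using one of the two defining properties of the estimator or the assumption $\theta^{*} \in \Omega_{n}$.

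Concretely, I would start from
\[ \cR(\widehat{s}_{\pess,n}) = \big[\cV(s^{*};\theta^{*}) - \min_{\theta \in \Omega_{n}}\cV(\widehat{s}_{\pess,n};\theta)\big] + \big[\min_{\theta \in \Omega_{n}}\cV(\widehat{s}_{\pess,n};\theta) - \cV(\widehat{s}_{\pess,n};\theta^{*})\big]. \]
The second bracket is non-positive because $\theta^{*} \in \Omega_{n}$ implies $\min_{\theta \in \Omega_{n}}\cV(\widehat{s}_{\pess,n};\theta) \le \cV(\widehat{s}_{\pess,n};\theta^{*})$. For the first bracket, I would use that $\widehat{s}_{\pess,n}$ is the maximizer of $\min_{\theta \in \Omega_{n}}\cV(\cdot;\theta)$ over $\bbS$, so $\min_{\theta \in \Omega_{n}}\cV(\widehat{s}_{\pess,n};\theta) \ge \min_{\theta \in \Omega_{n}}\cV(s^{*};\theta)$, which lets me upper-bound the first bracket by $\cV(s^{*};\theta^{*}) - \min_{\theta \in \Omega_{n}}\cV(s^{*};\theta) = \max_{\theta \in \Omega_{n}}\{\cV(s^{*};\theta^{*}) - \cV(s^{*};\theta)\}$. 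Combining the two gives the claim.

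There is no real obstacle here: the only thing the argument genuinely uses is (i) the inclusion $\theta^{*} \in \Omega_{n}$ to kill the second bracket, and (ii) the definition of $\widehat{s}_{\pess,n}$ as the outer maximizer in \eqref{eqn: pessimistic algorithm} to compare against $s^{*}$ in the inner min. Importantly, the argument localizes the uncertainty to the optimal assortment $s^{*}$ only, which is the exact mechanism through which the result later enables the \emph{positivity at optimum} relaxation of Assumption~\ref{ass: positivity}. The real work of the paper will come afterward, in bounding $\max_{\theta \in \Omega_{n}}\{\cV(s^{*};\theta^{*}) - \cV(s^{*};\theta)\}$ in the MNL setting and in showing that $\alpha_{n} \asymp d/n$ suffices to ensure $\theta^{*} \in \Omega_{n}$ with high probability; the lemma itself is a deterministic skeleton and requires no probabilistic input beyond its hypothesis.
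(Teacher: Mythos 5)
Your proof is correct and is essentially identical to the paper's: the add-and-subtract of $\min_{\theta \in \Omega_n}\cV(\widehat{s}_{\pess,n};\theta)$ is just an explicit rewriting of the paper's two chained inequalities (first replacing $\cV(\widehat{s}_{\pess,n};\theta^*)$ by the inner minimum via $\theta^* \in \Omega_n$, then invoking the outer optimality of $\widehat{s}_{\pess,n}$ to compare with $s^*$). No gaps; nothing further to add.
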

{\it Proof of Lemma \ref{lemma:general_pasta_regret}.}
\begin{align*}
&~ \cV(s^*;\theta^*)-\cV(\widehat{s}_{\pess,n};\theta^*) \\
& \ \leq \cV(s^*;\theta^*) - \min_{\theta \in \Omega_n}\cV(\widehat{s}_{\pess,n};\theta)
&& \hspace{0.8cm} (\text{by} ~ \theta^\ast \in \Omega_{n})\\
 &\ \leq \cV(s^*;\theta^*) - \min_{\theta \in \Omega_n}\cV(s^*;\theta) 
&& \hspace{-0.8cm} (\text{by} ~ \widehat{s}_{\pess,n} ~ \text{solves \eqref{eqn: pessimistic algorithm}})\\
&\  = \max_{\theta \in \Omega_n} \big\{\cV(s^{*};\theta^{*}) -  \cV(s^{*};\theta)\big\}. 
&&\hspace{2.4cm} \qedsymbol{}
\end{align*} 
Lemma \ref{lemma:general_pasta_regret} highlights the benefit of pessimistic optimization. In particular, guarantees for the regret of estimate-then-optimize approaches require the control of estimation error uniformly over all $ s \in \bbS $, that is, $\sup_{s \in \bbS}\big|\cV(s;\widehat{\theta}_{\ML,n}) - \cV(s;\theta^\ast)\big| $, in order to control error caused by the greedy policy, i.e., $\big|\cV(\widehat{s}_{\ML,n},\widehat{\theta}_{\ML,n}) - \cV(\widehat{s}_{\ML,n},\theta^\ast)\big|$. This may typically entail that the value function is estimated uniformly well across $s \in \bbS$. It further explains the reason why an estimate-then-optimize approach would require the positivity Assumption \ref{ass: positivity} for all $s \in \bbS$.
In contrast, our Lemma \ref{lemma:general_pasta_regret} suggests that controlling the estimation error at $s^\ast$ can be enough. Therefore, it is sufficient for our PASTA method to require the positivity only at optimum. 

Next, we impose the following assumptions to obtain the regret guarantee of our algorithm.

\begin{assumption}\label{ass: technical assumptions}
\mbox{ }\\
(I) {\sc [Positivity at Optimum]} The probability of observing the optimal assortment is positive, that is, $\pi_S(s^\ast)>0.$ \\
(II) {\sc [Likelihood-Based Concentration]} For any $0 < \delta < 1$, with probability at least $1-\delta$, we have: (1) $\theta^\ast \in \Omega_n$, and (2)
\[ \sup_{\theta \in \Omega_n}\left|L(\theta)-L(\theta^*)-\left(\widehat{L}_{n}(\theta)- \widehat{L}_{n}(\theta^*)\right)\right|\le \alpha_n. \]
\end{assumption}

We emphasize that \emph{PASTA only requires the positivity at optimum}. Compared to the positivity at all assortments in Assumption \ref{ass: positivity}, our Assumption \ref{ass: technical assumptions} (I) is much weaker and hence more plausible to be satisfied. 
Assumption \ref{ass: technical assumptions} (II) is a generic condition for likelihood-based concentration. We later justify that (II) above indeed holds under the general MNL model in Theorem~\ref{thm:likelyhood_convergence}.
In particular,  Statement~(1) of Part~(II) requires the validity of the likelihood-ratio-test-based confidence region $\Omega_{n}$ while Statement~(2) of Part~(II) requires the concentration of the likelihood-based localized empirical process \citep{vaart1996weak}. 

The positivity at optimum is associated with a finite constant $C_{s^*}= 1/\pi_{s}(s^\ast)$ related to the learning performance. 
We also denote $r_{s^{\ast}} \doteq \max_{j \in s^\ast}r_{j}$ as the largest possible revenue among all items in $s^\ast$. Notice that both constants $C_{s^\ast}$ and $r_{s^\ast}$ depend on the optimal assortment $s^\ast$ only. In the following lemma, we establish the estimation error bound at the optimal assortment $s^\ast$.

\begin{lemma}\label{lemma:general_pasta_theta}
Under Assumption \ref{ass: technical assumptions}, for any $0 < \delta < 1$, with probability at least $1-\delta$, we have for any $\theta \in \Omega_{n} $,
\[ \cV(s^{*};\theta^{*}) -  \cV(s^{*};\theta) \lesssim \: r_{s^{\ast}} C_{s^*}\sqrt{\alpha_n}. \]
\end{lemma}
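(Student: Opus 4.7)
The overall strategy is a three-step chain: convert $\alpha_n$-membership in $\Omega_n$ into a bound on the population loss gap $L(\theta) - L(\theta^*)$; then translate that loss gap into a KL bound on the choice distributions at the optimal assortment $s^*$ via positivity at optimum; and finally pass from KL to the value-function gap via Pinsker's inequality and a revenue-magnitude extraction.

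On the high-probability event from Assumption~\ref{ass: technical assumptions}(II), for any $\theta \in \Omega_n$ I have $\widehat L_n(\theta) \le \widehat L_n(\widehat{\theta}_{\ML,n}) + \alpha_n \le \widehat L_n(\theta^*) + \alpha_n$, where the first inequality uses $\theta \in \Omega_n$ and the second uses optimality of the MLE together with $\theta^* \in \Theta$. Combining this with the localized concentration statement yields $L(\theta) - L(\theta^*) \le \bigl(\widehat L_n(\theta) - \widehat L_n(\theta^*)\bigr) + \alpha_n \le 2\alpha_n$. Conditioning on $S$ under the true parameter identifies the loss gap as $L(\theta) - L(\theta^*) = \bbE_S\bigl[\KL\bigl(\pi_A(\cdot|S;\theta^*)\,\|\,\pi_A(\cdot|S;\theta)\bigr)\bigr]$. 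Expanding the outer expectation as a $\pi_S$-weighted sum over $s \in \bbS$ and discarding all but the $s = s^*$ term (every summand is nonnegative) gives $\pi_S(s^*)\,\KL\bigl(\pi_A(\cdot|s^*;\theta^*)\,\|\,\pi_A(\cdot|s^*;\theta)\bigr) \le 2\alpha_n$, i.e., the $s^*$-KL is at most $2C_{s^*}\alpha_n$. This is the only step that uses positivity at optimum.

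Finally, since $r_i \le r_{s^*}$ for $i \in s^*$ and $r_0 = 0$, I have $\cV(s^*;\theta^*) - \cV(s^*;\theta) \le r_{s^*}\sum_{i \in s^* \cup \{0\}}\bigl|\pi_A(i|s^*;\theta^*) - \pi_A(i|s^*;\theta)\bigr|$, which is $2 r_{s^*}$ times the total variation distance between the two choice distributions. Pinsker's inequality then gives $\cV(s^*;\theta^*) - \cV(s^*;\theta) \le r_{s^*}\sqrt{2\,\KL} \lesssim r_{s^*}\sqrt{C_{s^*}\alpha_n} \le r_{s^*} C_{s^*}\sqrt{\alpha_n}$, where the last step uses $C_{s^*} = 1/\pi_S(s^*) \ge 1$. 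The only conceptually nontrivial move is the second one: recognizing that positivity at optimum is precisely the leverage needed to extract the $s^*$-coordinate from a $\pi_S$-weighted KL average. After that, each remaining piece (MLE optimality, Pinsker, revenue majorization) is routine, and uniformity over $\theta \in \Omega_n$ is inherited from the uniformity built into Assumption~\ref{ass: technical assumptions}(II).
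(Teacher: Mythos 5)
Your proof is correct, but it takes a genuinely different route from the paper's. The paper first reduces the gap at $\theta$ to $2\max_{\theta\in\Omega_n}|\cV(s^*;\theta)-\cV(s^*;\widehat{\theta}_{\ML,n})|$ via a triangle inequality through the MLE (which is where $\theta^*\in\Omega_n$ is used), then bounds the value gap by $r_{s^*}C_{s^*}\,\bbE_S\bigl[\|\pi_A(\cdot|S;\theta)-\pi_A(\cdot|S;\widehat{\theta}_{\ML,n})\|_1\bigr]$ using an inverse-propensity-weighting representation and H\"older's inequality, and finally controls the $L^1$ term by the generalized Hellinger distance $H^2$, combining the inequality $L(\theta)-L(\theta^*)\ge 2H^2(\theta,\theta^*)$ with a triangle-type inequality for $H^2$ to conclude $H^2(\theta,\widehat{\theta}_{\ML,n})\le 4\alpha_n$. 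You instead stay entirely at the population KL level: you derive $L(\theta)-L(\theta^*)\le 2\alpha_n$ exactly as the paper does, identify this as $\bbE_S[\KL(\pi_A(\cdot|S;\theta^*)\,\|\,\pi_A(\cdot|S;\theta))]$, extract the $s^*$-coordinate using positivity at optimum, and finish with Pinsker's inequality at $s^*$ alone. Your argument is shorter, bypasses the Hellinger machinery and the detour through $\widehat{\theta}_{\ML,n}$ (so it does not actually need Statement~(1) of Assumption~\ref{ass: technical assumptions}(II), only MLE optimality over $\Theta$), and in fact yields the slightly sharper constant $r_{s^*}\sqrt{C_{s^*}}\sqrt{\alpha_n}$, since the propensity weight enters inside the square root rather than multiplicatively on an $L^1$ bound; this implies the stated bound because $C_{s^*}\ge 1$. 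What the paper's Hellinger-based route buys is a set of reusable intermediate lemmas (the $L^1$--$H^2$ comparison and the $H^2$ triangle inequality) that are also deployed in the proofs of Lemmas~\ref{thm:empirical_likelihood_bound} and~\ref{thm:likelyhood_convergence}, where a distance between $\theta$ and $\widehat{\theta}_{\ML,n}$ (rather than $\theta^*$) is needed.
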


Combining Lemmas~\ref{lemma:general_pasta_regret} and \ref{lemma:general_pasta_theta}, we summarize the regret bound for PASTA in the following theorem.

\begin{theorem}\label{thm:regret}
Under Assumption \ref{ass: technical assumptions}, for any $0 < \delta < 1$, with probability $1-\delta$, we have
\[ \cR(\widehat{s}_{\pess,n}) \lesssim \: r_{s^{\ast}} C_{s^*}\sqrt{\alpha_n}. \]
\end{theorem}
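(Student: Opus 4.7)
The plan is to derive Theorem~\ref{thm:regret} by directly chaining the two lemmas that immediately precede it, since each one handles exactly one of the two ingredients needed. Concretely, Lemma~\ref{lemma:general_pasta_regret} reduces the regret of PASTA to a worst-case value-function gap evaluated only at the single assortment $s^\ast$, and Lemma~\ref{lemma:general_pasta_theta} controls that gap in terms of $r_{s^\ast}$, $C_{s^\ast}$, and $\sqrt{\alpha_n}$. So the whole proof is a ``stitch together'' argument rather than new analysis.

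First, I would fix $\delta \in (0,1)$ and condition on the high-probability event provided by Assumption~\ref{ass: technical assumptions}~(II). This event, which holds with probability at least $1-\delta$, simultaneously gives (a) $\theta^\ast \in \Omega_n$ and (b) the uniform concentration of the localized empirical process over $\Omega_n$. Part (a) is exactly the hypothesis of Lemma~\ref{lemma:general_pasta_regret}, while part (b), combined with the positivity at optimum $\pi_S(s^\ast)>0$ from Assumption~\ref{ass: technical assumptions}~(I), is what drives Lemma~\ref{lemma:general_pasta_theta}. Importantly, both lemmas are stated as holding on the same $1-\delta$ event, so no union bound or probability inflation is needed.

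Next, on this event I would apply Lemma~\ref{lemma:general_pasta_regret} to obtain
\[
\cR(\widehat{s}_{\pess,n}) \;\le\; \max_{\theta \in \Omega_n}\bigl\{\cV(s^\ast;\theta^\ast) - \cV(s^\ast;\theta)\bigr\}.
\]
Then I would apply Lemma~\ref{lemma:general_pasta_theta}, which bounds the integrand $\cV(s^\ast;\theta^\ast)-\cV(s^\ast;\theta)$ for every $\theta \in \Omega_n$ by a quantity of order $r_{s^\ast}\, C_{s^\ast}\,\sqrt{\alpha_n}$. Taking the maximum over $\Omega_n$ preserves this bound (since the right-hand side is independent of $\theta$), yielding
\[
\cR(\widehat{s}_{\pess,n}) \;\lesssim\; r_{s^\ast}\, C_{s^\ast}\, \sqrt{\alpha_n},
\]
which is the desired conclusion.

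There is essentially no hard step here: the technical work has been absorbed into the two lemmas, and the theorem is a clean composition. The only small thing to be careful about is ensuring that the two lemmas are invoked on the same $1-\delta$ event (so that the final probability statement is $1-\delta$, not $1-2\delta$), which is clear from the formulation of Assumption~\ref{ass: technical assumptions}~(II) since both items of the assumption are packaged into a single high-probability event. If one preferred strict bookkeeping, one could state two events $E_1=\{\theta^\ast \in \Omega_n\}$ and $E_2=\{$uniform concentration$\}$, each of probability $\ge 1-\delta/2$, and work on $E_1 \cap E_2$; but given the assumption as stated, the cleaner single-event argument above suffices.
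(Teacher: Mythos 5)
Your proposal is correct and matches the paper exactly: the authors also obtain Theorem~\ref{thm:regret} by chaining Lemma~\ref{lemma:general_pasta_regret} (which holds on the event $\theta^\ast \in \Omega_n$ guaranteed by Assumption~\ref{ass: technical assumptions}~(II)) with Lemma~\ref{lemma:general_pasta_theta}, with no additional argument. Your remark about both lemmas being invoked on the same $1-\delta$ event is consistent with how the paper packages Assumption~\ref{ass: technical assumptions}~(II).
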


\section{Application: Multinomial Logit Model}\label{sec:mnl}
In this section, we consider the Multinomial Logit Model (MNL) for customer choices  $\pi_A(a|s)$. This is one of the most widely used models in assortment optimization literature \citep{feng2022consumer}. 
Under the MNL model, we will verify Assumption \ref{ass: technical assumptions} (II) and establish the regret bound for PASTA in this case. 

Given the item-specific features $ \{ x_{i} \}_{i \in [N]} $, MNL assumes that customer's preference for the $i$-th item is proportional to $\exp(x_{i}^\top\theta^\ast)$, where $\theta^{\ast} \in \Theta$ is the underlying unknown parameter. Here, we assume that the parameter space $\Theta \subseteq \bbR^{d}$ is compact with $\theta_{\max} \doteq \sup_{\theta \in \Theta}\|\theta\|_{2} < +\infty $. 
Given an assortment $s$, the customer choice probability under MNL is given by
\begin{align}\label{eqn:mnl_ass}
    \pi_A(i|s;\theta^\ast)= \frac{\exp(x_i^\top\theta^*)}{1+\sum_{j \in s}\exp(x_j^\top\theta^*)}, \quad \forall i \in s.
\end{align}
Moreover, the probability of no-purchase is normalized to $ \pi_{A}(0|s;\theta^\ast) = 1 /(1+\sum_{j \in s}\exp(x_j^\top\theta^*)) $. Based on \eqref{eqn: optimality condition} and the MNL model \eqref{eqn:mnl_ass}, the objective function for assortment optimization can be written as
\[ \cV(s;\theta) = \frac{\sum_{i\in s}r_{i}\exp(x_{i}^\top\theta)}{1+\sum_{i \in s}\exp(x_{i}^\top\theta)}. \]




We first justify Statement~(1) of Assumption \ref{ass: technical assumptions} under the MNL model. To this end, given the compactness of $\Theta$, there exists a finite constant $C_{A}>0$ such that for all $\theta \in \Theta$, $ s \in \bbS $ and $i \in s$, we have $ 1/\pi_A(i|s;\theta) \le C_{A}$. 

\begin{lemma}\label{thm:empirical_likelihood_bound}
Consider the MNL model \eqref{eqn:mnl_ass} with a compact set $\Theta$. Assume that $\theta^\ast \in \Theta$. For any $0 < \delta < 1$,  with probability $\ge (1-\delta)$, we have
\[ \widehat{L}_{n}(\theta^*)-\widehat{L}_{n}(\widehat\theta_{\ML,n}) \lesssim \frac{C_{A}d}{n}\log\frac{\theta_{\max}}{\delta}. \]
\end{lemma}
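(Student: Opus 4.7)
The plan is to control the log-likelihood gap via a uniform-in-$\theta$ concentration argument over the compact parameter space $\Theta$, combined with MLE optimality. I would introduce $\phi_\theta(S,A) \doteq \log\pi_A(A|S;\theta^*)-\log\pi_A(A|S;\theta)$, so that the target quantity equals $\widehat{L}_n(\theta^*)-\widehat{L}_n(\widehat\theta_{\ML,n}) = -\tfrac{1}{n}\sum_i \phi_{\widehat\theta_{\ML,n}}(S_i,A_i)$, while the population counterpart is $\bbE\phi_\theta = L(\theta)-L(\theta^*) = \bbE_S[\KL(\pi_A(\cdot|S;\theta^*)\,\|\,\pi_A(\cdot|S;\theta))]\ge 0$. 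Under the standing bound $1/\pi_A\le C_A$, the per-sample increments satisfy $|\phi_\theta|\lesssim\log C_A$ almost surely.

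Since $\widehat\theta_{\ML,n}$ is data-dependent, I would first establish a uniform concentration bound for $(1/n)\sum_i \phi_\theta-\bbE\phi_\theta$ over all $\theta\in\Theta$. Using the compactness of $\Theta$ with radius $\lesssim\theta_{\max}$, I would cover it by an $\epsilon$-net $N_\epsilon$ of cardinality at most $(C\theta_{\max}/\epsilon)^d$, apply Bernstein's inequality at each element of the net, and union-bound, producing a $\log(|N_\epsilon|/\delta)\lesssim d\log(\theta_{\max}/\delta)$ factor. To pass from the net back to all of $\Theta$, I would exploit the Lipschitz continuity of $\theta\mapsto\log\pi_A(A|S;\theta)$ whose gradient equals $x_A-\sum_j\pi_A(j|S;\theta)x_j$ and is bounded by $2\max_j\|x_j\|$ for MNL; choosing $\epsilon=1/n$ makes the discretization error negligible relative to the target rate.

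The crucial ingredient for the fast $1/n$ rate (as opposed to the slow $\sqrt{1/n}$ from a naive Hoeffding/Dudley bound) is a self-bounding variance estimate: I would prove $\mathrm{Var}(\phi_\theta)\lesssim C_A\cdot \bbE\phi_\theta$, which follows from $\pi_A\ge 1/C_A$ via the standard comparison between the second moment of a bounded log-likelihood-ratio and the $\chi^2$-divergence, and in turn between $\chi^2$ and $\KL$ under bounded likelihood ratios. Plugging this variance bound into Bernstein's inequality and applying $\sqrt{ab}\le\tfrac12(a+b)$ absorbs $\tfrac12 \bbE\phi_\theta$ into the deterministic side, yielding, uniformly over $\theta\in\Theta$,
\[ \tfrac{1}{n}\sum_{i=1}^n \phi_\theta(S_i,A_i)\;\ge\;\tfrac{1}{2}\,\bbE\phi_\theta \;-\; O\!\left(\tfrac{C_A d}{n}\log\tfrac{\theta_{\max}}{\delta}\right). \]

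Evaluating this lower bound at the random point $\widehat\theta_{\ML,n}$ and combining with the MLE optimality $(1/n)\sum_i\phi_{\widehat\theta_{\ML,n}}\le 0$ and with $\bbE\phi_{\widehat\theta_{\ML,n}}\ge 0$ gives $\widehat{L}_n(\theta^*)-\widehat{L}_n(\widehat\theta_{\ML,n})\lesssim C_A d\log(\theta_{\max}/\delta)/n$. The main obstacle is the self-bounding variance estimate: a purely bounded-differences approach yields only the slow $\sqrt{d/n}$ rate, and landing on the advertised $C_A d/n$ requires carefully exploiting the MNL structure to control $\bbE\phi_\theta^2$ by a constant multiple of $\bbE\phi_\theta$. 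Calibrating the $\epsilon$-net/Lipschitz extension so the final constants line up with the stated bound is a secondary, largely bookkeeping, issue.
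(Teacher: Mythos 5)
Your proof is correct and reaches the stated bound (up to an extra $\log n$ inside the logarithm; see below), but it takes a genuinely different route from the paper's. The paper localizes first: it invokes an external concentration result for the parametric MLE in Hellinger distance to show that $\widehat\theta_{\ML,n}$ lands in the oracle set $\widetilde\Omega_n=\{\theta\in\Theta: L(\theta)-L(\theta^*)\le\alpha_n\}$, and then controls the empirical process of log-likelihood ratios only over this localized class via Talagrand's inequality plus a Dudley entropy bound, the fast rate coming from the localized class having small variance $\sigma^2_{\cF_n}\lesssim C_A\alpha_n$. You dispense with localization entirely: your fast rate comes instead from the self-bounding variance inequality $\mathrm{Var}(\phi_\theta)\lesssim C_A\,\bbE\phi_\theta$ --- which is exactly the content of the paper's log-density-ratio variance lemma combined with $h^2\le\tfrac12\KL$, so it does hold under the standing bound $1/\pi_A\le C_A$ --- fed into Bernstein's inequality on an $\epsilon$-net with the $\sqrt{ab}\le\tfrac12(a+b)$ absorption; this is the standard ``Bernstein condition'' argument for fast rates of empirical risk minimization. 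Your route is more elementary and self-contained (no citation to an MLE--Hellinger concentration result, no Talagrand, no chaining), at the cost of a $\log n$ from the union bound over a $1/n$-net; the paper's bound, however, silently absorbs a comparable $\log(\sfL/\sigma_{\cF_n})$ factor into its $\lesssim$, so the two arguments are about equally precise on logarithms, while the paper's machinery would extend more readily beyond a finite-dimensional compact $\Theta$. One small observation: your final step needs only $\bbE\phi_{\widehat\theta_{\ML,n}}\ge 0$ to convert the uniform lower bound $\tfrac1n\sum_i\phi_\theta\ge\tfrac12\bbE\phi_\theta-O(\alpha_n)$ into the claim, since $\widehat L_n(\theta^*)-\widehat L_n(\widehat\theta_{\ML,n})=-\tfrac1n\sum_i\phi_{\widehat\theta_{\ML,n}}\le -\tfrac12\bbE\phi_{\widehat\theta_{\ML,n}}+O(\alpha_n)$; the MLE optimality $\tfrac1n\sum_i\phi_{\widehat\theta_{\ML,n}}\le0$ is not actually needed in that direction.
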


Lemma \ref{thm:empirical_likelihood_bound} suggests that, with $\alpha_{n}$ chosen as $\frac{C_{A}d}{n}\log\frac{\theta_{\max}}{\delta}$, we can guarantee that $\theta^{*} \in \Omega_{n}$ with high probability, which justifies Statement~(1) of Assumption \ref{ass: technical assumptions} (II). In particular, the order of $\alpha_{n}$ is $\cO(d/n)$. Notice that Lemma \ref{thm:empirical_likelihood_bound} does not depend on the distribution of $S$, which implies that no data coverage assumption on the observed assortments is required. The assumption that $\theta^\ast \in \Theta$ for $\Theta$ a compact set requires that given any assortment, every product has a chance of being selected by the customer in the data. This is a mild requirement as $\theta^\ast$ is always finite.


Next, we justify Statement~(2) of Assumption \ref{ass: technical assumptions} (II) in the following theorem. 

\begin{lemma}\label{thm:likelyhood_convergence}
    Consider the MNL model \eqref{eqn:mnl_ass}. Suppose conditions in Lemma \ref{thm:empirical_likelihood_bound} hold, and $L(\theta)$ and $L_n(\theta)$ are uniformly and strongly convex. Let $\alpha_{n} \eqsim {C_{A}d \over n}\log{\theta_{\max} \over \delta}$. For $ 0 < \delta < 1 $, with probability $ \ge (1 - \delta)$, we have
    \[ \sup_{\theta \in \Omega_n}\left|L(\theta)-L(\theta^*)-\left(\widehat{L}_{n}(\theta)- \widehat{L}_{n}(\theta^*)\right)\right|\le \alpha_n. \]
\end{lemma}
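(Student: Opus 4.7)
The quantity to bound is a centered empirical process $\theta \mapsto \Delta_n(\theta) \doteq \bigl[L(\theta) - L(\theta^*)\bigr] - \bigl[\widehat L_n(\theta) - \widehat L_n(\theta^*)\bigr]$ indexed over a \emph{data-dependent} localized set $\Omega_n$. My plan is to (i) use strong convexity to show $\Omega_n$ is contained in a deterministic ball of radius $O(\sqrt{\alpha_n})$ around $\theta^*$, and then (ii) apply a localized empirical-process / Bernstein argument on that ball, exploiting that the per-sample fluctuation has variance $O(\|\theta - \theta^*\|^2)$ for MNL.

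For step (i), write $\widehat L_n(\theta) - \widehat L_n(\widehat\theta_{\ML,n}) \le \alpha_n$ for every $\theta \in \Omega_n$. Strong convexity of $\widehat L_n$ with modulus $\mu > 0$ gives $\frac{\mu}{2}\|\theta - \widehat\theta_{\ML,n}\|_2^2 \le \alpha_n$, i.e.\ $\|\theta - \widehat\theta_{\ML,n}\|_2 \lesssim \sqrt{\alpha_n}$. Combining with Lemma~\ref{thm:empirical_likelihood_bound} (which on the same high-probability event gives $\widehat L_n(\theta^*) - \widehat L_n(\widehat\theta_{\ML,n}) \lesssim \alpha_n$, hence $\|\theta^* - \widehat\theta_{\ML,n}\|_2 \lesssim \sqrt{\alpha_n}$ by the same strong-convexity inequality) and the triangle inequality yields $\Omega_n \subseteq B(\theta^*, c\sqrt{\alpha_n})$ for an absolute constant $c$.

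For step (ii), define the loss-difference function $g_\theta(s,a) \doteq \log\pi_A(a\,|\,s;\theta^*) - \log\pi_A(a\,|\,s;\theta)$, so that $\Delta_n(\theta) = \bbE g_\theta(S,A) - n^{-1}\sum_{i=1}^n g_\theta(S_i,A_i)$. Using the closed form of MNL log-likelihoods and the compactness of $\Theta$ (so $\|x_j\|_2$-type quantities appear only through bounded terms after normalization), one verifies the Lipschitz estimate $|g_\theta(s,a)| \lesssim \|\theta - \theta^*\|_2$ and the variance bound $\mathrm{Var}\bigl(g_\theta(S,A)\bigr) \lesssim \|\theta - \theta^*\|_2^2$. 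On the ball $B(\theta^*, c\sqrt{\alpha_n})$ these translate to envelope $O(\sqrt{\alpha_n})$ and variance proxy $O(\alpha_n)$. A standard $\varepsilon$-net on a $d$-dimensional ball has covering entropy $\lesssim d\log(1/\varepsilon)$; combining the Lipschitz property of $\theta \mapsto g_\theta$ with Bernstein's inequality on each net point and a union bound yields
\[
\sup_{\theta \in \Omega_n} |\Delta_n(\theta)| \;\lesssim\; \sqrt{\frac{\alpha_n \cdot d\log(\theta_{\max}/\delta)}{n}} \;+\; \frac{\sqrt{\alpha_n}\,d\log(\theta_{\max}/\delta)}{n}.
\]
With the prescribed choice $\alpha_n \eqsim (C_A d/n)\log(\theta_{\max}/\delta)$, the first term is of order $\sqrt{\alpha_n \cdot \alpha_n} = \alpha_n$ and the second is $o(\alpha_n)$, completing the bound.

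The main obstacle is step (ii): producing a uniform bound whose order \emph{matches} $\alpha_n$ rather than $\sqrt{\alpha_n}$. This is what the localization from step (i) buys us --- the variance $O(\alpha_n)$ inside the square root cancels one factor of $\sqrt{d/n}$, so the Bernstein contribution lands exactly at $\alpha_n$. A subtle point is that $\Omega_n$ is random, so formally I would first intersect with the high-probability event from Lemma~\ref{thm:empirical_likelihood_bound}, then replace $\Omega_n$ by the deterministic enclosing ball $B(\theta^*, c\sqrt{\alpha_n})$ before applying the net argument; alternatively, a peeling over shells of radius $2^{-k}$ gives the same conclusion without explicitly invoking strong convexity twice. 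Absorbing the logarithmic factors into the constants (using $C_A$, $\theta_{\max}$, and the strong-convexity modulus $\mu$) gives the claimed $\sup_{\theta \in \Omega_n}|\Delta_n(\theta)| \le \alpha_n$ with probability at least $1-\delta$.
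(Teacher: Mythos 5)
Your proposal follows essentially the same route as the paper: first use strong convexity (together with Lemma~\ref{thm:empirical_likelihood_bound}) to enclose the random set $\Omega_n$ in a deterministic ball of radius $O(\sqrt{\alpha_n})$ around $\theta^*$, then run a localized empirical-process bound on that ball with envelope $O(\sqrt{\alpha_n})$, variance proxy $O(\alpha_n)$, and covering entropy $\lesssim d\log(1/\varepsilon)$. The only cosmetic difference is that the paper packages the second step via Talagrand's inequality plus a Dudley-type expected-supremum bound rather than your explicit $\varepsilon$-net with Bernstein and a union bound; both deliver the same $\alpha_n$ rate.
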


Finally, with the Assumption of positivity only at optimum (Assumption \ref{ass: technical assumptions} (I)), we can apply Theorem \ref{thm:regret} to establish the regret bound for PASTA in MNL.

\begin{theorem}\label{thm:PASTA_MNL}
Consider the MNL model (given in Equation \eqref{eqn:mnl_ass}). Assume that the conditions in Lemma \ref{thm:likelyhood_convergence} hold and that $\pi_{S}(s^\ast) > 0$. Fix a $\delta \in (0, 1)$.
Suppose $\widehat{s}_{\pess,n}$ is output of PASTA with $\alpha_{n} \eqsim {C_{A}d \over n}\log{\theta_{\max} \over \delta}$. Then with probability $\ge (1-\delta)$, we have
\[ 
\cR(\widehat{s}_{\pess,n}) \lesssim \: r_{s^{\ast}} C_{s^*}\sqrt{\frac{C_{A}d}{n}\log\frac{\theta_{\max}}{\delta}}.
\]
\end{theorem}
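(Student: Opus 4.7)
The plan is to derive Theorem \ref{thm:PASTA_MNL} as a direct consequence of the generic regret bound in Theorem \ref{thm:regret}, once Assumption \ref{ass: technical assumptions} is verified under the MNL model with the prescribed choice $\alpha_n \eqsim \frac{C_A d}{n}\log\frac{\theta_{\max}}{\delta}$. Part (I) of Assumption \ref{ass: technical assumptions} reduces to the hypothesis $\pi_S(s^\ast) > 0$, which is directly stipulated in the statement and supplies the constant $C_{s^\ast} = 1/\pi_S(s^\ast)$ appearing on the right-hand side of the bound.

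For Part (II), I would handle its two statements separately and then combine them by a union bound. Statement (1), that $\theta^\ast \in \Omega_n(\alpha_n)$, follows from Lemma \ref{thm:empirical_likelihood_bound} applied with probability parameter $\delta/2$: the likelihood gap between $\theta^\ast$ and $\widehat{\theta}_{\ML,n}$ is bounded, up to an absolute constant, by $\frac{C_A d}{n}\log\frac{2\theta_{\max}}{\delta}$, which matches the prescribed $\alpha_n$ up to constants inside the logarithm. Statement (2), the uniform concentration of the localized likelihood process over $\Omega_n$, is precisely Lemma \ref{thm:likelyhood_convergence}, also applied with probability parameter $\delta/2$; its hypotheses---the MNL setup of Lemma \ref{thm:empirical_likelihood_bound} together with uniform strong convexity of $L$ and $\widehat{L}_n$---are carried over from this theorem's assumptions. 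A union bound then yields both statements simultaneously on a common event of probability at least $1-\delta$.

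Finally, on that event, Theorem \ref{thm:regret} gives $\cR(\widehat{s}_{\pess,n}) \lesssim r_{s^{\ast}} C_{s^\ast}\sqrt{\alpha_n}$; substituting the chosen $\alpha_n \eqsim \frac{C_A d}{n}\log\frac{\theta_{\max}}{\delta}$ produces the stated rate. The only delicate step is the probability bookkeeping: I must split $\delta$ between the two lemmas without degrading the order of $\alpha_n$, which is clean because both lemmas have logarithmic $\delta$-dependence, so the factor of two is absorbed into implicit constants. Beyond this minor accounting, the argument is a direct assembly of the already-established lemmas and Theorem \ref{thm:regret}, and I do not anticipate any genuine obstacle.
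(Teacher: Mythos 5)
Your proposal is correct and follows essentially the same route as the paper: verify Assumption \ref{ass: technical assumptions} (I) from the stated hypothesis $\pi_S(s^\ast)>0$, verify (II) via Lemma \ref{thm:empirical_likelihood_bound} and Lemma \ref{thm:likelyhood_convergence} with the prescribed $\alpha_n$, and then invoke Theorem \ref{thm:regret}. The $\delta/2$ union-bound bookkeeping you add is harmless and, as you note, is absorbed into the implicit constants.
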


We remark that under the MNL model (given in Equation \eqref{eqn:mnl_ass}), the order of regret is $\cO(\sqrt{d/n})$. This is due to the concentration rate of MNL's empirical likelihood ratio in Lemma \ref{thm:empirical_likelihood_bound}. Such a rate of regret bound matches  those in the literature under parametric model assumptions \cite{qian2011performance,mo2022efficient}. However, existing literature requires the positivity $\pi_{S}(s) > 0$ at every $s \in \bbS$. In contrast, Theorem \ref{thm:PASTA_MNL} only requires positivity $\pi_{S}(s^{\ast}) > 0$ at the optimal assortment $s^\ast$. Furthermore, we can show that $\min_{i \in N}\pi_A(i \mid S = [N];\theta) \leq 1/N$ for any $\theta \in \Theta$, which implies that $C_A \geq N$. Therefore, our regret is of order at least $\sqrt{N}$, where $N$ is the total number of available items. It is an interesting problem to establish the minimax lower bound of offline assortment optimization in terms of $N, n, d$ and the cardinal number of $s^\ast$. This will investigated in a subsequent work.

\section{\textit{PASTA} Algorithm}\label{sec:algo}
In this section, we propose an efficient algorithm for solving the max-min problem given in Optimization Problem \eqref{eqn: pessimistic algorithm} for the MNL model. Specifically, let
$$
\mathcal{V}(s;\theta) = \sum_{i\in s}\frac{r_i\exp(x^\top_i\theta)}{1+\sum_{j \in s}\exp(x^\top_j\theta)}
$$ and 
given the confidence set $\Omega_{n}$, we wish to solve
\begin{equation*}
    \max_{s \in \mathbb{S}} \min_{\theta \in \Omega_n} \mathcal{V}(s;\theta).
\end{equation*}
The proposed iterative algorithm is executed for a maximum of $T$ iterations. At the $t$-th iteration, given $s_t$ and $\theta_t$ from the previous iteration, we consecutively execute the following two steps:
\begin{itemize}
    \item Step 1: Compute the optimal assortment $s_{t+1}$ given $\theta_t$ (see Section~\ref{sec:step_1}).
    \item Step 2: Compute the optimal $\theta_{t+1}$ using $s_{t+1}$ (see Section~\ref{sec:step_2}).
\end{itemize}
The corresponding pseudo-code is presented in Algorithm~\ref{alg:main} below. 
\begin{algorithm}[h]
   \caption{PASTA}
   \label{alg:main}
\begin{algorithmic}
\STATE {\bfseries Input:} offline dataset $\{(S_i,A_i,R_i)\}_{i=1}^{n}$; $\alpha_n$; $\{r_i\}_{i=1}^{N}$; $\{x_i\}_{i=1}^{N}$; maximum number of iterations $T$ 
\STATE {\bfseries Output:} the solution to pessimistic assortment optimization $\widehat s$
\STATE{$\widehat{L}_{n}(\theta)\doteq -\frac{1}{n}\sum_{i=1}^{n}\log\pi_{A}(A_{i}|S_i;\theta)$}
\STATE{$\widehat\theta_{\ML,n} \gets \argmin_{\theta \in \Theta}\widehat{L}_{n}(\theta)$}
\STATE{$\Omega_n \gets \{ \theta \in \Theta: \widehat{L}_{n}(\theta) - \widehat{L}_{n}(\widehat{\theta}_{\ML,n}) \leq \alpha_{n} \} $}
\STATE{$t \gets 0; \theta_t \gets \widehat\theta_{\ML,n}$ \quad\quad/* Initialize $\theta_0$ as $\widehat\theta_{\ML,n}$ */}
\FOR{$t=1$ {\bfseries to} $T$}
\STATE $s_t \gets \textbf{SolveLP}(\theta_{t-1},\{r_i\}_{i=1}^{N},\{x_i\}_{i=1}^{N})$ 
\STATE \qquad \qquad \qquad /* Section~\ref{sec:step_1} */
\STATE $\theta_t \gets \textbf{SolveGD}(s_{t},\Omega_{n},\{r_i\}_{i=1}^{N},\{x_i\}_{i=1}^{N})$ 
\STATE \qquad \qquad \qquad /* Section~\ref{sec:step_2} */
\ENDFOR
\STATE $\hat s \gets s_T$
\end{algorithmic}
\end{algorithm}

\subsection{Optimal Assortment Computation}\label{sec:step_1}
Given the MNL model parameter $\theta_{t}$, computing the assortment $s_{t+1}$ that maximizes the expected revenue can be formulated as a linear programming (LP) problem. 

Suppose that an assortment $s$ can be represented by an $N$-dimensional binary vector $\gamma\in \{0,1\}^N$ where $\gamma_j = 1$ if and only if $j \in s$. Suppose that $s \in \bbS$ corresponds to the following feasible set for $\gamma$ with $M$ linear inequality constraints:
\[ \Gamma=\left\{\gamma \in \{0,1\}^N: \sum_{j \in N}a_{ij}\gamma_j\leq b_i ~ \text{for}~ i \in [M]\right\}, \]
where the matrix of constraint coefficients $[a_{ij}]_{i\in [M], j\in[N]}$ is a totally unimodular matrix~\cite{pang_2017}. In other words, based on the one-to-one correspondence between $s$ and $\gamma$, we have $s \in \bbS$ if and only if $\gamma \in \Gamma$. 


Next, we denote $v_i = \exp(x_{i}^\top \theta_{t})$ as the preference score for the $i$-th item. The customer choice probability under the MNL model \eqref{eqn:mnl_ass} becomes $\pi_A(i|s)=\frac{v_i}{1+\sum_{j\in s}v_j}$. The optimization for $s_{t+1}$ can be formulated as
\begin{align} \label{eqn:opt_reward}
\max_{\gamma \in \Gamma}\frac{\sum_{i\in [N]}r_iv_i\gamma_i}{1+\sum_{i\in[N]}v_i\gamma_i},
\end{align}
which is equivalent to the following linear programming problem~\cite{AO_LP}:
\begin{equation} \label{eqn:opt_reward_lp}
\begin{aligned}
\max_{w_j: j \in [N] \cup \{0\}} \ & \sum_{j\in[N]}r_j w_j\\
\text{subject to} \quad & \sum_{j\in[N]}w_j + w_0 = 1\\
& \sum_{j\in [N]}a_{ij}\frac{w_j}{v_j}\leq b_i w_0 \quad \forall i \in [M] \\
& 0 \leq \frac{w_j}{v_j} \leq w_0 \quad \forall j \in [N].
\end{aligned}
\end{equation}
In particular, we can recover the optimal solution to  Problem~\eqref{eqn:opt_reward}, denoted as $\gamma^*$, using the optimal solution to Problem~\eqref{eqn:opt_reward_lp}, denoted by $w^*$, via the following formula:
\begin{equation}\label{eqn:translate_lp_sol}
    \gamma_j^* = \frac{w_j^*}{v_j w_0^*} \quad \forall j \in [N].
\end{equation}
To conclude, at the $t$-th iteration, in order to compute an optimal assortment $s_{t+1}$ for a given $\theta_t$, we first solve an LP problem in \eqref{eqn:opt_reward_lp} for $w^\ast$. Then we recover $\gamma^*$ via \eqref{eqn:translate_lp_sol}. Finally, the updated assortment $s_{t+1}$ is obtained by the correspondence $i \in s_{t+1}$ if and only if $\gamma_{j}^* = 1$.

\subsection{Model Parameter Computation}\label{sec:step_2}
For a given optimized assortment $s_{t+1}$ from Section \ref{sec:step_1}, we aim to search for the worst-case MNL parameter $\theta_{t+1}$ from the confidence set $\Omega_{n}$ that minimizes the expected revenue. In particular, we employ a gradient descent with line search (GDLS) method to compute $\theta_{t+1}$ by solving the following problem
\begin{equation}
    \min_{\theta \in \Omega_n} \mathcal{V}(s_{t+1};\theta).
\end{equation}
Here, we remark that $\cV(s_{t+1};\theta) = {\sum_{i \in s_{t+1}}r_{i}\exp(x_{i}^\top\theta) \over 1 + \sum_{i \in s_{t+1}}\exp(x_{i}^\top\theta)}$ is a locally Lipschitz function in $\theta$. 
Given a feasible initial parameter $\theta^{(0)}\in\Omega_n$, we run at most $L$ 
gradient descent steps. Suppose $\beta_\ell$ is the step size for gradient descent in the $\ell$-th 
step. At each step $\ell = 1,2,\cdots,L$, we do a line search to maintain the feasibility. In 
particular, given $\theta^{(\ell-1)}\in\Omega_n$, we first evaluate the gradient as $\xi_\ell = \nabla_{\theta} \mathcal{V}(s_{t+1};\theta^{(\ell-1)})$.
Then we initiate $\beta_\ell$ with a pre-specified step size $\beta_\ell = \widetilde{\beta}$, 
and check whether $\theta^{(\ell)} = \theta^{(\ell-1)} - \beta_\ell \xi_\ell$ is feasible, i.e. 
$\theta^{(\ell)} \in \Omega_{n}$. If not, we set $\beta_\ell \leftarrow c\beta_\ell$ for some pre-specified $c \in (0,1)$, and recompute 
$\theta^{(\ell)} = \theta^{(\ell-1)} - \beta_\ell \xi_\ell$. Such a search is repeated until 
$\theta^{(\ell)}$ is feasible. We provide the pseudocode in Algorithm~\ref{algo:GDLS} for the overall process. Note that $L, \widetilde{\beta}, c$ are all hyper-parameters. In all of our numerical studies, we set $L=2$, $\widetilde{\beta}=0.01$ and $c=\frac{1}{2}$, which performs well empirically.


\begin{algorithm}[h]
   \caption{Gradient Descent with Line Search (GDLS)}
   \label{algo:GDLS}
\begin{algorithmic}
\STATE {\bfseries Input:} assortment $s_{t+1}$; feasible set $\Omega_{n}$;
initial parameter $\theta^{(0)}$; 
initial step size $\tilde{\beta}$; 
step shrinkage constant $c$; 
number of descent steps $L$
\STATE {\bfseries Output:} the updated parameter $\theta_{t+1}$
\STATE{$\ell \gets 0$}
\FOR{$\ell=1$ {\bfseries to} $L$}
\STATE{$\xi_\ell= \nabla_{\theta} \mathcal{V}(s_{t+1};\theta^{(\ell-1)})$ \quad/* compute the gradient */ }
\STATE{$\beta_\ell \gets \widetilde{\beta}$}
\STATE{$\theta^{(\ell)}\gets\theta^{(\ell-1)}-\beta_\ell\xi_\ell$}
\WHILE{$\theta^{(\ell)} \notin \Omega_{n}$}
\STATE{$\beta_\ell \gets c\beta_\ell$ \quad/* decrease the step size */}
\STATE{$\theta^{(\ell)}\gets\theta^{(\ell-1)}-\beta_\ell\xi_\ell$}
\ENDWHILE
\ENDFOR
\STATE{$\theta_{t+1}\gets\theta^{(\ell)}$}

\end{algorithmic}
\end{algorithm}

\section{Experiments}\label{sec:exp}
We compare the PASTA method with assortment optimization without pessimism (referred to as the \textit{baseline} method in the sequel). 
Our method and the baseline method are evaluated on synthetic data for which the optimal assortment $s^*$ and true parameter $\theta^*$ are known so that the true regrets can be computed. We describe the data generation process and the baseline method in details below.

\subsection{Data Generation}\label{sec:data_generation}
We consider the assortment optimization scenarios described by $N$, $K$, $d$, $n$ and $p$, where $N$ is the total number of available products; $K$ is the cardinality constraint of the assortments, i.e., $\mathbb S = \{s: |s| \leq K\}$; $d$ is the dimension of $\theta^*$ and $\{x_j\}_{j=1}^N$; $n$ is the sample size of the offline dataset; $p$ is the probability for sampling the optimal assortment $s^*$. Similar to \cite{chen2020dynamic}, we first generate the true preference vector $\theta^*$ as a uniformly random unit $d$-dim vector. For $i \in \{1,\dots,N\}$, we generate $r_i$ (the reward of product $i$) uniformly from the range $[0.5, 0.8]$ and generate $x_i$ (the feature of product $i$) as uniformly random unit $d$-dim vector such that $\exp(x_i^{\top}\theta^*)\leq \exp(-0.6)$ to avoid degenerate cases, where the optimal assortments include too few items. Given such information, the true optimal assortment $s^*$ can be computed. Then, we  generate an offline dataset $\mathcal{D}=\{(S_i,A_i,R_i)\}_{i=1}^{n}$ with $n$ samples. For $i \in \{1,\dots,n\}$, we generate $S_i$ following the distribution $\pi_S$ such that $\pi_S(s^*)=p$ and $\pi_S(s)=\frac{1-p}{|\mathbb{S}|-1}$, where $0<p<1$ is the probability of observing the optimal assortment $s^*$. After the assortment $S_i$ is sampled, the customer choice (action) $A_i$ is sampled according to the probability computed by MNL as in Eq.~\eqref{eqn:mnl_ass} with the true parameter $\theta^*$. 

\subsection{Baseline}
In our experiments, we use the gradient descent method to find $\widehat\theta_{\ML,n}$ that minimizes the empirical negative log-likelihood function. Then given $\widehat\theta_{\ML,n}$, the baseline method solves the assortment optimization problem by solving the linear programming problem in \eqref{eqn:opt_reward_lp}. 

\subsection{Performance Comparison}
For a given $(N,K,d,n,p)$, we repeat the data generation process in Section~\ref{sec:data_generation} to randomly  generate $50$ offline datasets. The solutions of PASTA and the baseline method are recorded in these experiments. For hyper-parameters, we set $\alpha_n = 2\widehat L_{\ML}$ where $\widehat L_{\ML} = \widehat L_n(\widehat\theta_{\ML,n})$ and the maximum of iteration $T=30$. We measure the performance with two metrics: (1) the \textit{average regret} of the solutions which indicates how far the performance of the solutions is to that of the \textit{optimal} performance (i.e., revenue of $s^*$); (2) the \textit{assortment accuracy} of the solutions (with respect to the optimal assortment $s^*$). The assortment accuracy of an assortment~$s$ is defined as the ratio of the number of correctly chosen products to the number of products in $s^*$.
The key results are summarized below. 

\textbf{Effect of Sample Size.} We set $N=40$, $K=8$, $d=16$ and $p=0.9$. We then gradually increase the number of samples $n$ . The result is presented in Figure~\ref{fig:exp_effect_n_40_60} indicating that PASTA significantly outperforms the baseline method. While the performance of the baseline method improves with increasing number of samples, the PASTA method maintains a regret that is less than $25\%$ of that of the baseline method. The same experiment repeated with an increased number of products ($N=60$, $K=15$) demonstrates that the gain of the PASTA method is stable, as presented in Figure~\ref{fig:exp_effect_n_40_60}. 

\begin{figure}[h]
\vskip 0.2in
\begin{center}
\centerline{\includegraphics[width=0.7\columnwidth, scale = 0.5]{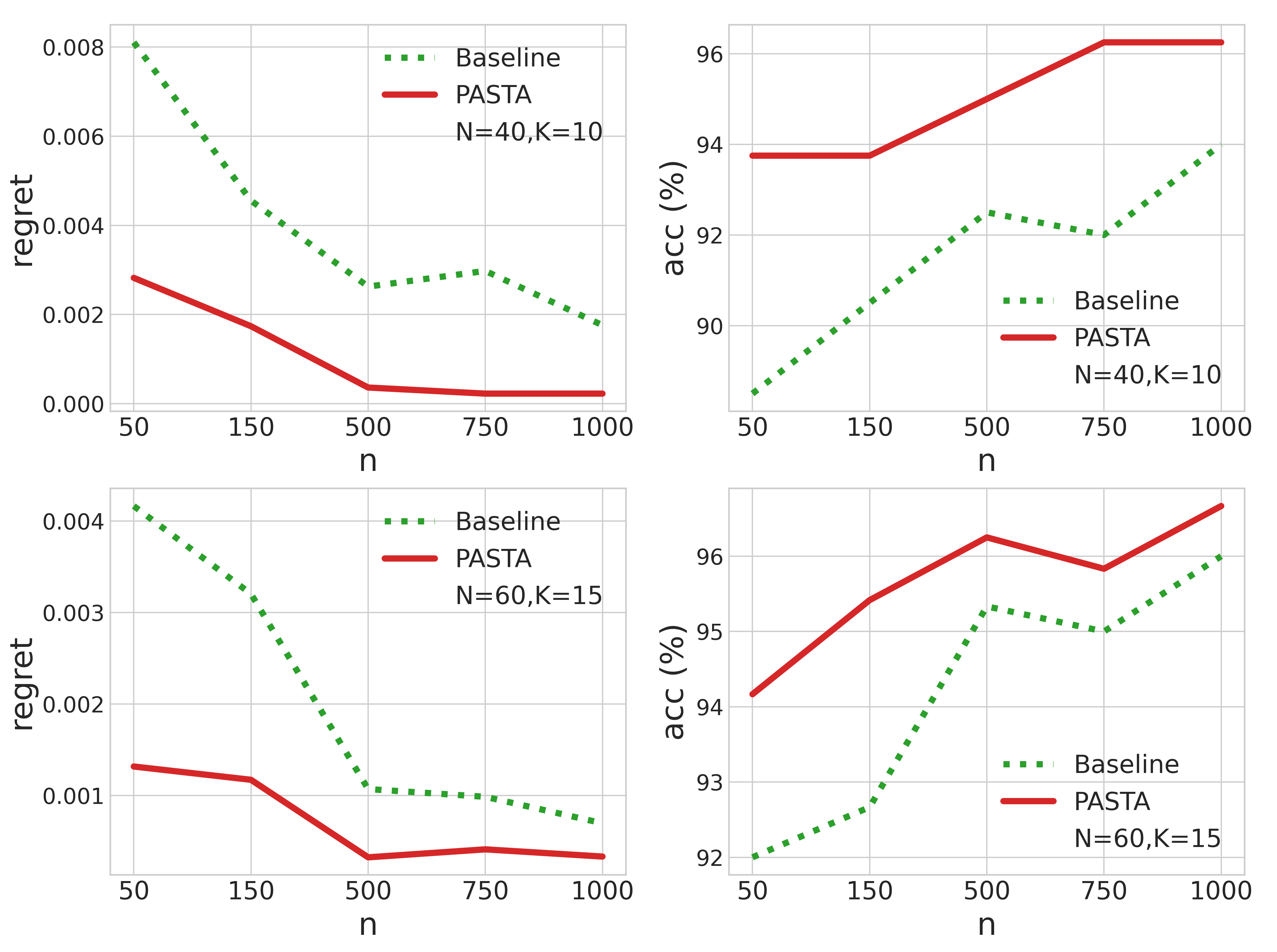}}
\caption{Performance comparison between PASTA and the baseline method with varying number of samples ($n$). On the left is the average regret (the lower the better) while the assortment accuracy (the higher the better) is on the right. 
}
\label{fig:exp_effect_n_40_60}
\end{center}
\vskip -0.2in
\end{figure}

\textbf{Effect of Probability of Sampling Optimal Assortment in Offline Data.} We set $N=40$, $K=8$, $d=16$, $n=150$, and let $p \in \{0.1, 0.3, 0.5, 0.7, 0.9\}$.
We also study the effect of $p$ in scenarios with an increased total number of products ($N=60$, $K=15$). As can be seen in Figure~\ref{fig:exp_effect_p}, the gain of pessimistic assortment optimization is consistent and robust for varying values of $p$.
\begin{figure}[h]
\vskip 0.2in
\begin{center}
\centerline{\includegraphics[width=0.7\columnwidth, scale = 0.5]{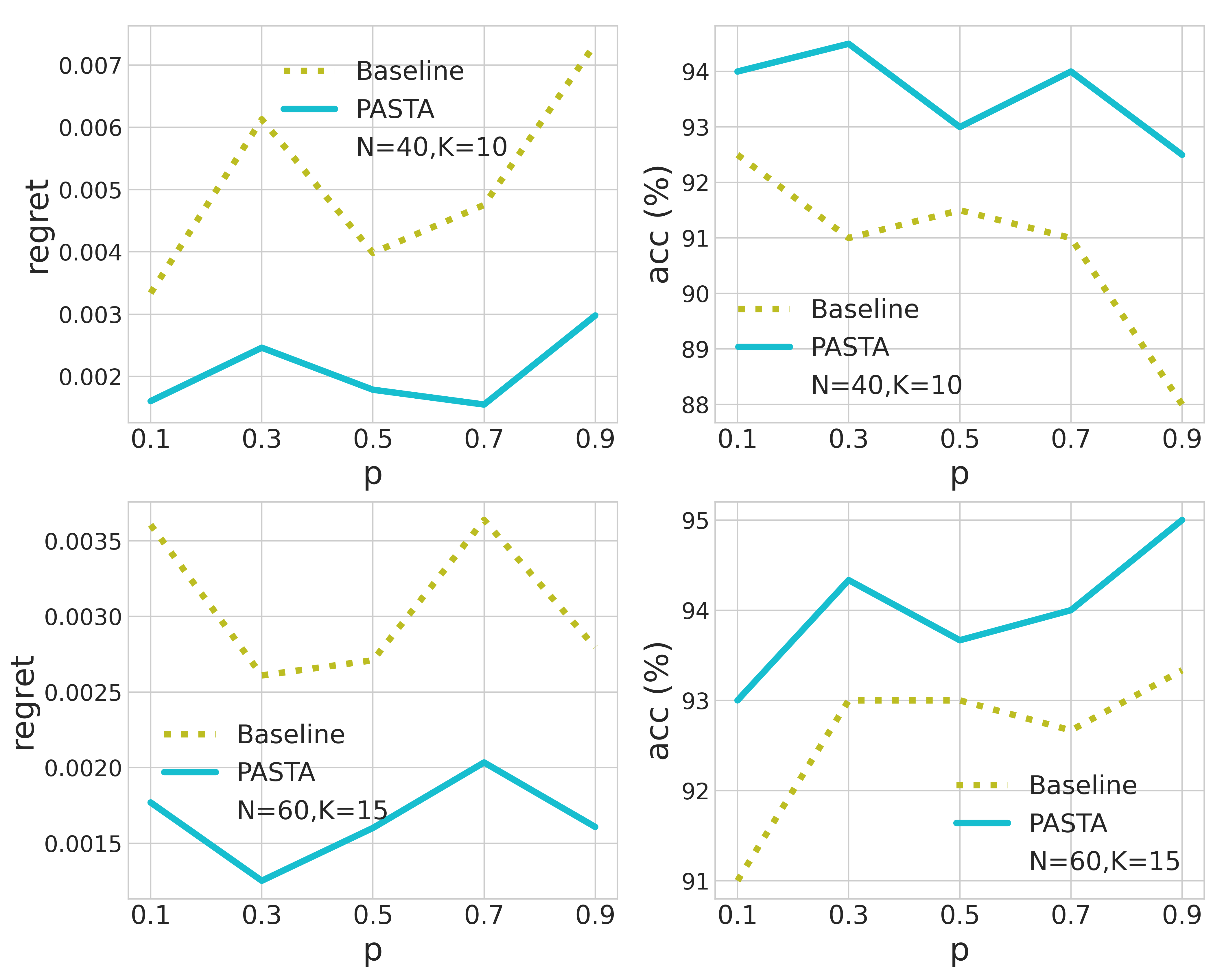}}
\caption{Comparison between PASTA and the baseline method with varying probability of the optimal assortment ($p$). Top row: $N=40$; bottom row: $N=60$.}
\label{fig:exp_effect_p}
\end{center}
\vskip -0.2in
\end{figure}

\textbf{Effect of Dimension of Features.} We set $N=20$, $K=5$, $p=0.9$, $n=150$, and let $d \in \{8,20,32,64,128\}$. In order to characterize the effect of dimension $d$, we generate $d$ elements of $\theta^*$ independently from $\text{Uniform}[-1,1]$. The results are presented in Figure~\ref{fig:exp_effect_d}. We observe that while both the regret of the baseline method and that of the pessimistic assortment optimization increase with increasing dimensions of features, the PASTA method maintains its performance gain as the dimension $d$ varies.  
\begin{figure}[H]
\vskip 0.2in
\begin{center}
\centerline{\includegraphics[width=0.7\columnwidth]{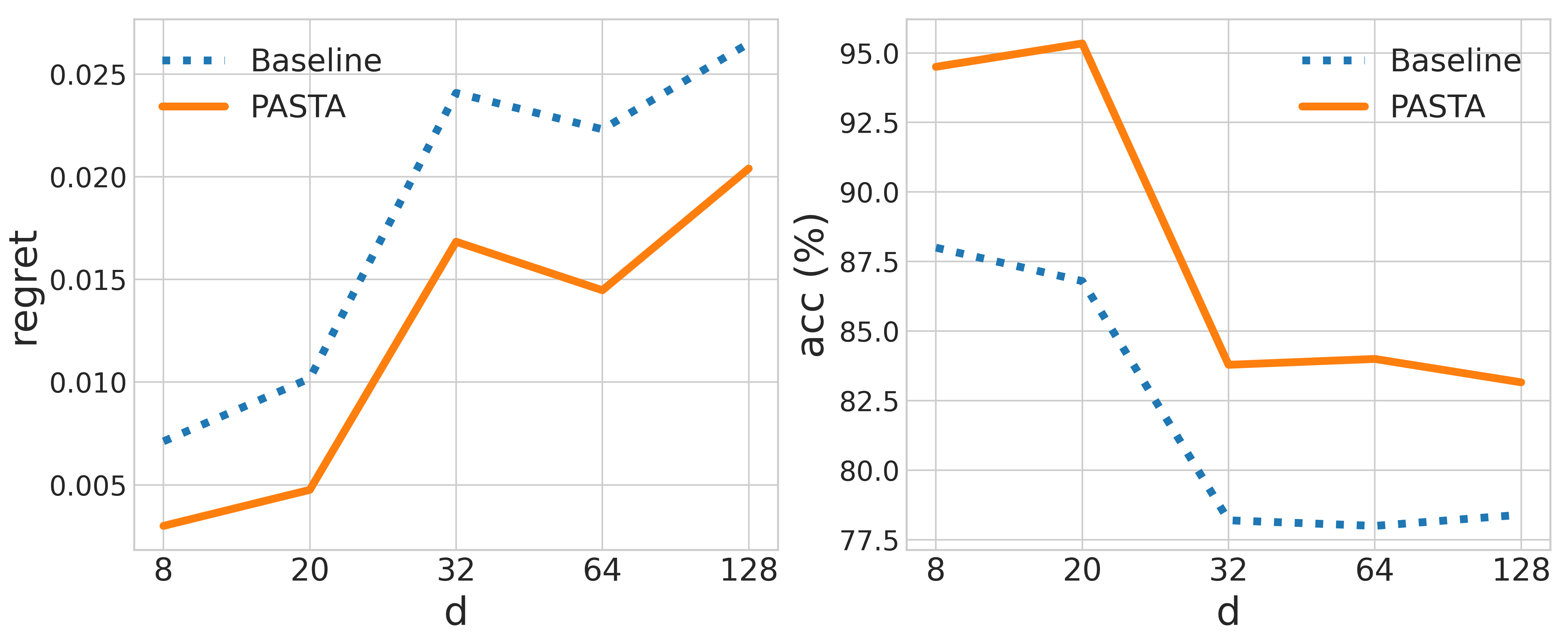}}
\caption{Comparison between PASTA and the baseline method with increasing dimensions of product features ($d$).}
\label{fig:exp_effect_d}
\end{center}
\vskip -0.2in
\end{figure}

\section{Conclusion}\label{sec:conclusion}
This work addresses the issue of insufficient data coverage in offline assortment optimization problems. This becomes more challenging as the number of choices grows quickly as a function of the number of items $N$.
We presented a framework of pessimistic assortment optimization and provided theoretical justifications for our approach. We then performed an in-depth study of the Multinomial Logit Model (MNL), and derived a finite-sample regret bound of pessimistic assortment optimization for this popular model. We presented an efficient algorithm to solve the pessimistic assortment optimization problem for MNL, and demonstrated significant improvements of our approach over the baseline method by extensive numerical studies.

\nocite{langley00}

\bibliography{icml_2023}
\bibliographystyle{icml2023}

\newpage
\appendix
\onecolumn
\label{sec:appendix}
\section{Proof of Theoretical Results}
Throughout the proofs, we use $\theta^*$ to denote the true parameter and $n$ to denote the number of samples. We use $\widehat L_n$ to denote the empirical negative log-likelihood function, i.e., $\widehat{L}_n(\theta)= -\frac{1}{n}\sum_{i=1}^{n}\log\pi_{A}(A_{i}|S_i;\theta)$ where $\{(S_i, A_{i}, R_{i})\}_{i=1}^n $ is the offline dataset. We use $L$ and $\widehat \theta_{\ML,n}$ to respectively denote the negative log-likelihood function , i.e., $L(\theta)=-\bbE[\log \pi_A(A|S;\theta)]$, and  the MLE of $\theta^*$, i.e.,
$\widehat{\theta}_{\ML,n} \in \argmin_{\theta \in \Theta} \left\{\widehat{L}_n(\theta)\right\}$. The confidence region $\Omega_n$ is defined as $\Omega_{n} \doteq \{ \theta \in \Theta: \widehat{L}_n(\theta) - \widehat{L}_n(\widehat{\theta}_{\ML,n}) \leq \alpha_{n}\}.$

For a general pair of random variables $(X,Y)$, assume that the conditional probability density function of $Y$ given $X$ is parameterically modeled by $p(y|x;\theta)$ for  parameter $\theta$.
For technical reasons, we will consider the following distances. 
\begin{definition}[Squared Hellinger Distance]
\begin{equation}
\label{def:h2}
h^2(p(\cdot|x;\theta_1),p(\cdot|x;\theta_2))=\frac{1}{2}\int\left( \sqrt{p_1(y|x;\theta_1)}-\sqrt{p_2(y|x;\theta_2)} \right)^2 dy.
\end{equation}
\end{definition}

\begin{definition}[Hellinger Distance]
\begin{equation}
h(p(\cdot|x;\theta_1),p(\cdot|x;\theta_2)) = \sqrt{h^2(p(\cdot|x;\theta_1),p(\cdot|x;\theta_2))}.
\end{equation}
\end{definition}

\begin{definition}[Generalized Squared Hellinger Distance]
 \begin{equation}\label{def:H2}
H^2(\theta_1,\theta_2) = \mathbb{E}_X\Big[h^2(p(\cdot|X;\theta_1),p(\cdot|X;\theta_2))\Big].
\end{equation}   
\end{definition}

\begin{definition}[Generalized Hellinger Distance]
\begin{equation}
    H(\theta_1,\theta_2) = \mathbb{E}_X\Big[\sqrt{h^2(p(\cdot|X;\theta_1),p(\cdot|X;\theta_2))}\Big].
\end{equation}
\end{definition}

In our theoretical results, we particularly consider  $p(y|x;\theta) = \pi_{A}(a|s;\theta)$ as the conditional density of $A$ given $S$ (hereafter denoted as $A|S$).

\label{sec:proof_lemma_general_pasta_theta}
\subsection{Proof of Lemma~\ref{lemma:general_pasta_theta}}
Under Assumption \ref{ass: technical assumptions}, for any $0 < \delta < 1$, with probability at least $1-\delta$, we have for any $\theta \in \Omega_{n} $,
\begin{align}
 \cV(s^{*};\theta^{*}) -  \cV(s^{*};\theta) \lesssim \: r_{s^{\ast}} C_{s^*}\sqrt{\alpha_n},
\end{align}
where $r_{s^{\ast}} \doteq \max_{j \in s^*}r_{j}$ is the largest possible revenue among all items in the optimal assortment.

\begin{proof}[Proof of Lemma~\ref{lemma:general_pasta_theta}]
For any $\theta$ such that $\widehat L_n(\theta) - \widehat L_n(\widehat \theta_{\ML,n}) \leq \alpha_n$, i.e., $\theta \in \Omega_n$, we have 
\begin{align*}
    &\cV(s^{*};\theta^{*})-\cV(s^*;\theta) \leq \bigg| \cV(s^*;\theta)-\cV(s^{*};\theta^{*}) \bigg| \\
    &\leq \bigg| \cV(s^*;\theta)-\cV(s^*;\widehat{\theta}_{\ML,n})+\cV(s^*;\widehat{\theta}_{\ML,n})-\cV(s^{*};\theta^{*}) \bigg|\\
    &\leq \bigg| \cV(s^*;\theta)-\cV(s^*;\widehat{\theta}_{\ML,n})\bigg| +\bigg|\cV(s^*;\widehat{\theta}_{\ML,n})-\cV(s^{*};\theta^{*}) \bigg| \\
    &\leq 2\max_{\theta \in \Omega_n}\bigg| \cV(s^*;\theta)-\cV(s^*;\widehat{\theta}_{\ML,n})\bigg| \qquad \qquad  \text{(By Assumption \ref{ass: technical assumptions}~(II) $\theta^* \in \Omega_n$)}.
\end{align*}
With Lemma~\ref{lemma:l1_distance_ub}, we have that for any $\theta \in \Theta$,
\begin{align*}
    &\bigg| \cV(s^*;\theta)-\cV(s^*;\widehat{\theta}_{\ML,n}) \bigg| \leq r_{s^{\ast}}C_{s^*}\mathbb{E}_{S}\bigg[||\pi_A(\cdot|S;\theta)-\pi_A(\cdot|S;\widehat{\theta}_{\ML,n})||_{1}\bigg]
\end{align*}
where $||\cdot||_1$ is the $\ell_1$-norm, $r_{s^{\ast}} \doteq \max_{j \in s^*}r_{j}$ is the largest possible revenue among all items and $C_{s^*}=1/\pi_S(s^*)$.

In Lemma~\ref{lemma:l1_H_ub}, we establish that 
\[ \mathbb{E}_{S}\bigg[||\pi_A(\cdot|S;\theta)-\pi_A(\cdot|S;\widehat{\theta}_{\ML,n})||_{1}\bigg] \leq 2\sqrt{2}\sqrt{H^2(\theta, \widehat{\theta}_{\ML,n})}, \]
where $H^2$ is the generalized squared Hellinger distance defined in \eqref{def:H2} with $p(y|x;\theta) = \pi_{A}(a|s;\theta)$ as the conditional density of $A|S$.

Combining the above two inequalities, we have that for any $\theta \in \Theta$,
\begin{align}\label{eqn:regret_ub_H2}
\bigg| \cV(s^*;\theta)-\cV(s^*;\widehat{\theta}_{\ML,n})\bigg| \lesssim r_{s^{\ast}}C_{s^*}\sqrt{H^2(\theta,\widehat{\theta}_{\ML,n})}.
\end{align}

In the following, we use the fact that $\log x \leq 2(\sqrt{x}-1)$ for any $x \ge 0$ to show that for any $s\in\mathbb{S}$ and any $\theta$:
\begin{align*}
    &-\int\pi_A(a|s;\theta^\ast)\log\frac{\pi_A(a|s;\theta)}{\pi_A(a|s;\theta^\ast)}da \\
&\geq -2\int \pi_A(a|s;\theta^*)\left(\sqrt{\frac{\pi_A(a|s;\theta)}{\pi_A(a|s;\theta^*)}-1}\right)da\\
&=\int\left(\pi_A(a|s;\theta^*)+\pi_A(a|s;\theta)-2\sqrt{\pi_A(a|s;\theta)\pi_A(a|s;\theta^*)}\right)da\\
&=\int\left(\sqrt{\pi_A(a|s;\theta^*)}+\sqrt{\pi_A(a|s;\theta)}\right)^2da\\
&\geq \int\left(\sqrt{\pi_A(a|s;\theta^*)}-\sqrt{\pi_A(a|s;\theta)}\right)^2da, 
\end{align*}
which implies that
\begin{equation}\label{eqn:h2_ub}
L(\theta) - L(\theta^*) \geq 2H^2(\theta;\theta^*).
\end{equation}

By Lemma~\ref{lemma:property_H}, we have that for any $\theta \in \Omega_n$,
\begin{equation}
\label{eqn:proof_H2_inequality}
H^2(\theta,\widehat{\theta}_{\ML,n}) \leq 2H^2(\theta^*, \theta)+2H^2(\theta^*, \widehat{\theta}_{\ML,n})
 \leq L(\widehat \theta_{\ML,n}) - L(\theta^*) + \{L(\theta) - L(\theta^*)\}.
\end{equation}

From Assumption~\ref{ass: technical assumptions}, 
we have that with probability at least $1-\delta$,
for any $\theta \in \Omega_n$,
\begin{align*}
    \left|L(\theta)-L(\theta^*)-\left(\widehat{L}_{n}(\theta)- \widehat{L}_{n}(\theta^*)\right)\right|\le \alpha_n.
\end{align*}
In other words, under Assumption~\ref{ass: technical assumptions}, with probability at least $1-\delta$ for any $\theta \in \Omega_n$, 
\begin{align}\label{eqn:proof_likelihood_alpha}
L(\theta)-L(\theta^*) \leq \big|\widehat{L}_{n}(\theta)- \widehat{L}_{n}(\theta^*)\big| + \alpha_n \leq 2\alpha_n.
\end{align}
Plugging Eq.~\eqref{eqn:proof_likelihood_alpha} into Eq.~\eqref{eqn:proof_H2_inequality}, we have that with probability at least $1-\delta$, for any $\theta \in \Omega_n$, 
\begin{equation}
H^2(\theta,\widehat{\theta}_{\ML,n}) \leq 4\alpha_n.
\end{equation}
Combining the above inequality and Eq.~\eqref{eqn:regret_ub_H2}, we have that, with probability at least $1-\delta$, $\big| \cV(s^*;\theta)-\cV(s^*;\widehat{\theta}_{\ML}))\big| \lesssim r_{s^{\ast}}C_{s^*}\sqrt{\alpha_n}$ for all $\theta \in \Omega_n$ . This concludes the proof. 
\end{proof}

\label{sec:proof_thm_empirical_likelihood_bound}
\subsection{Proof of Lemma~\ref{thm:empirical_likelihood_bound}}
Consider the MNL model \eqref{eqn:mnl_ass} with a compact set $\Theta$. Assume that $\theta^\ast \in \Theta$. For $0 < \delta < 1$,  with probability at least $1-\delta$, we have
\begin{align}\label{eq:coverage}
    \widehat{L}_{n}(\theta^*)-\widehat{L}_{n}(\widehat\theta_{\ML,n}) \lesssim \frac{C_{A}d}{n}\log\frac{\theta_{\max}}{\delta}.
\end{align}

\begin{proof}[Proof of Lemma~\ref{thm:empirical_likelihood_bound}]

Fix $0 < \delta < 1$.
Suppose $\alpha_n \eqsim {C_{A}d \over n}\log{2\theta_{\max} \over \delta}$.
Define an oracle confidence set as
\[ \widetilde{\Omega}_{n} \doteq \left\{ \theta \in \Theta: L(\theta) - L(\theta^\ast) \le \alpha_{n} \right\}. \]
In particular, $\theta^\ast \in \widetilde{\Omega}_{n}$. By Lemmas \ref{lemma:likelihood_H2} and \ref{lemma:mle_hellinger}, we also have with probability at least $1 - \delta/2$,
\[ L(\widehat{\theta}_{\ML,n}) - L(\theta^\ast) \le 2C_{A}H^{2}(\widehat{\theta}_{\ML,n}, \theta^\ast) \lesssim \alpha_n, \]
that is, $\widehat{\theta}_{\ML,n} \in \widetilde{\Omega}_n$.

Define $\cF_{n} \doteq \left\{\log{\pi_{A}(A|S;\theta^\ast) \over \pi_{A}(A|S;\theta)}:\theta \in \widetilde{\Omega}_{n}\right\}$. In particular, for any $f \in \cF_n$, we have $\|f\|_{\infty} \le 2\log C_{A}$.
Let $\|\bbP_{n} - \bbP\|_{\cF_n} \doteq \sup_{f \in \cF_n}|(\bbP_{n} - \bbP)(f)| $ be the envelope.
By Talagrand's inequality, with probability at least $1-\delta/2$, we have for any $f \in \cF_n$,
\begin{align}
    (\bbP_n - \bbP)(f) \lesssim \bbE\|\bbP_{n} - \bbP\|_{\cF_n} + \sqrt{\frac{1}{n}\left\{(\log C_{A})\bbE\|\bbP_{n} - \bbP\|_{\cF_n} + \sup_{f \in \cF_n}\bbE(f-\bbE f)^2\right\}\log\frac{2}{\delta}} +  {\log C_{A} \over n}\log\frac{2}{\delta}.
    \label{eq:talagrand}
\end{align}
For the variance term, we have
\[ \begin{aligned}
    & \sigma_{\cF_{n}}^2 \doteq \sup_{f \in \cF_n}\bbE(f-\bbE f)^2 \le \sup_{\theta \in \widetilde{\Omega}_n}\bbE\left(\log{\pi_{A}(A|S;\theta^*) \over \pi_{A}(A|S;\theta)}\right)^2 \\ 
    & \le \sup_{\theta \in \widetilde{\Omega}_n}\bbE\left\{ 2C_Ah^2\big( \pi_{A}(\cdot|S;\theta^*), \pi_{A}(\cdot|S;\theta) \big) \right\} && (\text{by Lemma \ref{lemma:log-density-ratio_variance}}) \\
    & = 2C_{A}\sup_{\theta \in \widetilde{\Omega}_n}H^2(\theta^*, \theta) \\
    & \le 2C_{A}\sup_{\theta \in \widetilde{\Omega}_n}[L(\theta) - L(\theta^*)] && (\text{by \eqref{eqn:proof_H2_inequality}}) \\
    & = 2C_A\alpha_n. && (\text{by definition of $\widetilde{\Omega}_n$})
\end{aligned} \]
For the expected envelope, our goal below is to apply \citet[Theorem 7.13]{sen2018gentle} (stated in Theorem \ref{thm:expected_sup}). Consider the covering number $\cN(\epsilon, \cF_{n}, L^2(\bbQ))$ for any given  $\epsilon > 0$ and finitely supported probability measure $\bbQ$. By Lemma \ref{lemma:lipschitz}, based on the MNL model \eqref{eqn:mnl_ass}, for some $\sfL < +\infty$, $\cF_n$ is a class of $\sfL$-Lipschitz functions with respect to the index space $(\Theta,\|\cdot\|_2)$. Then in terms of the bracketing number $\cN_{[]}$ and covering number $\cN$, for any $\epsilon > 0$ and probability measure $\bbQ$, we have
\[ \cN(\epsilon \sfL, \cF_{n}, L^2(\bbQ)) \le \cN_{[]}(2\epsilon \sfL, \cF_{n}, L^2(\bbQ)) \le \cN(\epsilon, \Theta, \|\cdot\|_2). \]
By $\Theta \subseteq \bbR^d$ and $\Theta$ is compact, we further have $\cN(\epsilon,\Theta,\|\cdot\|_2) \lesssim \left({1 \over \epsilon}\right)^{d} $. Therefore,
\[ \cN(\epsilon, \cF_n, L^2(\bbQ))\lesssim \left({\sfL \over \epsilon}\right)^{d}. \]
By Theorem \ref{thm:expected_sup}, we further have
\[ \bbE\|\bbP_n - \bbP\|_{\cF_n} \lesssim 
\sqrt{{d \over n}\sigma_{\cF_{n}}^2 \log{\sfL \over \sigma_{\cF_{n}}}} \vee \left\{{d \over n}\times 2\log(C_A)\log{\sfL \over \sigma_{\cF_n}}\right\}
\lesssim \sqrt{{C_{A}d \over n}\alpha_n} \eqsim {C_{A}d \over n}\sqrt{\log{2\theta_{\max} \over \delta}} \lesssim \alpha_n.\]
The Talagrand's inequality \eqref{eq:talagrand} becomes
\[ (\bbP_n - \bbP)(f) \lesssim \alpha_n. \]
In particular, in the case of $\widehat{\theta}_{\ML,n} \in \widetilde{\Omega}_n$ corresponding to $f(\widehat{\theta}_{\ML,n}) \in \cF_n $, we have
\[ \widehat{L}_n(\theta^\ast) - \widehat{L}_n(\widehat{\theta}_{\ML,n}) = -\bbP_n f(\widehat{\theta}_{\ML,n}) \lesssim -\bbP f(\widehat{\theta}_{\ML,n}) + \alpha_n = \underbrace{L(\theta^\ast) - L(\widehat{\theta}_{\ML,n})}_{\le 0} + \alpha_n \le \alpha_n. \]
This complete the proof.
\end{proof}

\begin{lemma}
    \label{lemma:lipschitz}
    Consider the MNL model: 
    \[ \pi_{A}(i|s;\theta) = {\exp(x_i^\top\theta)\over 1 + \sum_{j \in s}\exp(x_j^\top\theta)}; \quad \pi_{A}(0|s;\theta) = {1 \over 1 + \sum_{j \in s}\exp(x_j^\top\theta)}; \quad \forall i \in s, ~ s \in \bbS, ~ \theta \in \Theta. \]
    Let $\theta_{\max} \doteq \max_{\theta \in \Theta}\|\theta\|_2$, $x_{\max} \doteq \max_{j \in [N]}\|x_{j}\|_2$. If $\Theta$ is compact, that is, $\theta_{\max} < +\infty$, then the log-likelihood ratio $ \log{\pi_{A}(A|S;\theta^\ast) \over \pi_{A}(A|S;\theta)} $ is a uniformly Lipschitz function in $\theta \in \Theta$.
\end{lemma}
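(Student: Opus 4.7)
The plan is to show that $\theta \mapsto \log \pi_A(A|S;\theta)$ has a gradient whose Euclidean norm is bounded by a constant that depends only on $x_{\max}$ (and not on $\theta$, $A$, or $S$); the log-likelihood ratio then inherits this Lipschitz property by the triangle inequality, since $\log \pi_A(A|S;\theta^*)$ does not depend on $\theta$.

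First I would write out $\log \pi_A(i|s;\theta)$ explicitly from the MNL form. For $i \in s$ we have $\log \pi_A(i|s;\theta) = x_i^\top \theta - \log\bigl(1 + \sum_{j \in s}\exp(x_j^\top \theta)\bigr)$, while for the no-purchase case $i = 0$ we have $\log \pi_A(0|s;\theta) = -\log\bigl(1 + \sum_{j \in s}\exp(x_j^\top \theta)\bigr)$. Differentiating the log-sum-exp in the denominator gives
\begin{equation*}
\nabla_\theta \log \pi_A(i|s;\theta) \;=\; x_i \cdot \mathbb{I}(i \neq 0) \;-\; \sum_{j \in s} \pi_A(j|s;\theta)\, x_j.
\end{equation*}

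Second, I would bound this gradient uniformly. Since $\pi_A(j|s;\theta) \in [0,1]$ and $\sum_{j \in s} \pi_A(j|s;\theta) \le 1$, the triangle inequality yields
\begin{equation*}
\bigl\|\nabla_\theta \log \pi_A(i|s;\theta)\bigr\|_2 \;\le\; \|x_i\|_2 \cdot \mathbb{I}(i \neq 0) + \sum_{j \in s} \pi_A(j|s;\theta)\,\|x_j\|_2 \;\le\; 2 x_{\max},
\end{equation*}
uniformly over $i \in s \cup \{0\}$, $s \in \bbS$, and $\theta \in \Theta$.

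Third, since $\Theta$ is compact and convex-closure bounded by $\theta_{\max}$ (or in any case the segment between any two points in $\Theta$ stays in a bounded set on which the gradient is bounded), the mean value theorem gives
\begin{equation*}
\bigl|\log \pi_A(A|S;\theta_1) - \log \pi_A(A|S;\theta_2)\bigr| \;\le\; 2 x_{\max}\,\|\theta_1 - \theta_2\|_2
\end{equation*}
for every realization $(A,S)$. Applying this twice, once at $\theta^*$ and once at a generic $\theta$, shows that the log-likelihood ratio is $\sfL$-Lipschitz with $\sfL \le 2 x_{\max}$, uniformly in $(A,S)$. There is no real obstacle here: the argument is a routine computation of the softmax gradient plus the elementary fact that probabilities lie in $[0,1]$; compactness of $\Theta$ only enters to ensure the index space for the envelope in the covering-number bound of the preceding lemma is well-defined.
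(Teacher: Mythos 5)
Your proof is correct and takes essentially the same route as the paper's: differentiate the log-likelihood in $\theta$ and bound the gradient norm uniformly over $(A,S,\theta)$. A small bonus of your version is that keeping the gradient in the form $x_A\,\mathbb{I}(A\neq 0)-\sum_{j\in S}\pi_A(j|S;\theta)\,x_j$ and using $\sum_{j}\pi_A(j|S;\theta)\le 1$ gives the clean, dimension-free constant $2x_{\max}$, whereas the paper's expansion yields the much looser $x_{\max}+2Nx_{\max}\exp(x_{\max}\theta_{\max})$; either constant suffices for the downstream covering-number argument.
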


\begin{proof}[Proof of Lemma \ref{lemma:lipschitz}]
    \begin{align}
        \begin{aligned}
        & \left\|{\partial \over \partial \theta}\log{\pi_{A}(A|S;\theta^*) \over \pi_{A}(A|S;\theta)}\right\|_2 = {1 \over \pi_{A}(A|S;\theta)}\left\|{\partial \pi(A|S;\theta) \over \partial \theta}\right\|_2 \\
        & = [1-\pi_{A}(A|S;\theta)]\left\|  x_{A} + \sum_{j \in S}\exp(x_j^\top\theta)(x_{A} - x_{j}) \right\|_2 \\
        & \le x_{\max} + 2Nx_{\max}\exp(x_{\max}\theta_{\max}) \doteq \sfL < +\infty.
        \end{aligned}
        \label{eq:lipschitz}
    \end{align}
    That is, $\theta \mapsto \log{\pi_{A}(A|S;\theta^\ast) \over \pi_{A}(A|S;\theta)} $ is $\sfL$-Lipschitz.
\end{proof}

\begin{lemma}[Concentration of Parametric MLE in Hellinger Distance]
    \label{lemma:mle_hellinger}
    Consider the MNL model \eqref{eqn:mnl_ass}. For $0 < \delta < 1$, with probability at least $ 1-\delta $, we have
    \[ H^2(\widehat{\theta}_{\ML,n},\theta^\ast) \lesssim {d \over n}\log{\theta_{\max} \over \delta}. \]
\end{lemma}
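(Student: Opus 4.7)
The plan is to establish the standard parametric Hellinger rate for the MLE via the classical three-step argument: basic inequality, localized empirical process bound, and peeling. The starting point is the defining inequality $\widehat{L}_n(\widehat\theta_{\ML,n}) \le \widehat{L}_n(\theta^*)$, which can be rewritten as $\bbP_n\left[\log\{\pi_A(A|S;\widehat\theta_{\ML,n})/\pi_A(A|S;\theta^*)\}\right] \geq 0$. Combining this with the inequality $L(\theta) - L(\theta^*) \geq 2H^2(\theta,\theta^*)$ already derived via the elementary bound $\log x \le 2(\sqrt{x}-1)$ inside the proof of Lemma~\ref{lemma:general_pasta_theta}, I obtain the bootstrap bound
\[ 2H^2(\widehat\theta_{\ML,n},\theta^*) \leq (\bbP_n - \bbP)\left[\log\frac{\pi_A(A|S;\widehat\theta_{\ML,n})}{\pi_A(A|S;\theta^*)}\right]. \]

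Next I would control the right-hand side by localization. For a radius $r>0$, define the shell class $\cF(r) \doteq \{f_\theta(S,A) \doteq \log[\pi_A(A|S;\theta)/\pi_A(A|S;\theta^*)] : \theta \in \Theta, \; H(\theta,\theta^*) \leq r\}$. By Lemma~\ref{lemma:log-density-ratio_variance} (applied exactly as in the variance computation of Lemma~\ref{thm:empirical_likelihood_bound}), every $f \in \cF(r)$ has $\mathrm{Var}(f) \le 2C_A H^2(\theta,\theta^*) \le 2C_A r^2$, and $\|f\|_\infty \le 2\log C_A$ by boundedness of the MNL choice probabilities. By Lemma~\ref{lemma:lipschitz}, $\theta \mapsto f_\theta$ is $\sfL$-Lipschitz, so the covering number satisfies $\cN(\epsilon, \cF(r), L^2(\bbQ)) \lesssim (\sfL/\epsilon)^d$. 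Plugging these into Theorem~\ref{thm:expected_sup} yields $\bbE\sup_{f \in \cF(r)}|(\bbP_n - \bbP)f| \lesssim \sqrt{C_A d r^2 / n \cdot \log(\sfL/r)} \vee (d/n)\log(\sfL/r)$, after which Talagrand's inequality gives, with probability at least $1-\delta$,
\[ \sup_{f \in \cF(r)}|(\bbP_n - \bbP)f| \;\lesssim\; \sqrt{\tfrac{C_A d r^2}{n}\log\tfrac{1}{\delta}} + \tfrac{d}{n}\log\tfrac{\theta_{\max}}{\delta}. \]

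To turn this into a statement at the data-dependent point $\widehat\theta_{\ML,n}$, I would apply a peeling argument: partition $\Theta$ into shells $\{\theta : 2^{j-1} r_0 < H(\theta,\theta^*) \leq 2^j r_0\}$ for a base radius $r_0 \eqsim \sqrt{d/n}$, apply the display above on each shell with $\delta$ replaced by $\delta/(2j^2)$, and take a union bound. The logarithmic penalty from the union bound can be absorbed into the $\log(\theta_{\max}/\delta)$ factor (using compactness of $\Theta$ to upper-bound the maximum shell index by a polynomial in $n$ and $\theta_{\max}$). The result is that uniformly in $\theta$,
\[ |(\bbP_n - \bbP)f_\theta| \;\lesssim\; \sqrt{\tfrac{d}{n}\log\tfrac{\theta_{\max}}{\delta}} \cdot H(\theta,\theta^*) \;+\; \tfrac{d}{n}\log\tfrac{\theta_{\max}}{\delta}. \]
Evaluating at $\theta = \widehat\theta_{\ML,n}$ and combining with the basic inequality produces the quadratic-in-$H$ relation $2H^2 \lesssim \sqrt{(d/n)\log(\theta_{\max}/\delta)}\cdot H + (d/n)\log(\theta_{\max}/\delta)$, whose solution gives the claimed bound $H^2(\widehat\theta_{\ML,n},\theta^*) \lesssim (d/n)\log(\theta_{\max}/\delta)$.

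The main obstacle I expect is the peeling step: I need to do it carefully enough that the final rate preserves the parametric $d/n$ scaling and does not acquire spurious $\log n$ factors, while still allowing for the data-dependent localization at $\widehat\theta_{\ML,n}$ (whose Hellinger distance is what we are trying to bound). Threading the boundedness constant $C_A$ and the Lipschitz constant $\sfL = x_{\max} + 2Nx_{\max}\exp(x_{\max}\theta_{\max})$ through the variance and covering-entropy bounds requires attention so that only the benign $\log(\theta_{\max}/\delta)$ factor survives in the final estimate, matching the scaling displayed in Lemma~\ref{thm:empirical_likelihood_bound}.
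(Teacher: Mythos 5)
Your proposal is correct in outline, but it takes a genuinely different route from the paper: the paper's entire proof of this lemma is a one-line citation to Corollary~2 of \citet{fu2022offline}, specialized from general Markov chains to i.i.d.\ data, whereas you reconstruct the result from scratch via the classical basic-inequality / localization / peeling argument. What your approach buys is self-containedness and coherence with the rest of the appendix --- you reuse exactly the tools the paper already deploys for Lemma~\ref{thm:empirical_likelihood_bound} (the variance bound of Lemma~\ref{lemma:log-density-ratio_variance}, the Lipschitz covering bound of Lemma~\ref{lemma:lipschitz}, the expected-supremum bound of Theorem~\ref{thm:expected_sup}, and the inequality $L(\theta)-L(\theta^\ast)\ge 2H^2(\theta,\theta^\ast)$ from \eqref{eqn:h2_ub}), so the reader need not verify that the MNL model satisfies the hypotheses of an external corollary. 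The cost is the bookkeeping you yourself flag: the Dudley entropy integral at localization radius $r\eqsim\sqrt{d/n}$ produces a $\log(\sfL/r)$ factor, and the peeling union bound produces an extra $\log\log$ term, so arriving at the clean $\tfrac{d}{n}\log\tfrac{\theta_{\max}}{\delta}$ rate with no spurious $\log n$ or $\log\sfL$ requires either absorbing these into the hidden constants or accepting a slightly inflated logarithm --- a looseness the paper's own proof of Lemma~\ref{thm:empirical_likelihood_bound} also passes over silently, so you are no worse off than the source. One small point to make explicit when you write this up: in the basic inequality, the sign convention $L(\widehat{\theta}_{\ML,n})-L(\theta^\ast)=-\bbP f_{\widehat{\theta}_{\ML,n}}\le(\bbP_n-\bbP)f_{\widehat{\theta}_{\ML,n}}$ uses $\bbP_n f_{\widehat{\theta}_{\ML,n}}\ge 0$, which is exactly the MLE optimality; this is correct but worth displaying since the paper's loss is the \emph{negative} log-likelihood.
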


\begin{proof}[Proof of Lemma \ref{lemma:mle_hellinger}]
    We follow from \citet[Corollary 2]{fu2022offline} as a special case, where our data are generated i.i.d. instead of being a general Markov chain.
\end{proof}

\label{sec:proof_thm_likelyhood_convergence}
\subsection{Proof of Lemma~\ref{thm:likelyhood_convergence}}
Consider the MNL model \eqref{eqn:mnl_ass}. Suppose conditions in Lemma \ref{thm:empirical_likelihood_bound} hold, and  $L(\theta)$ and $L_n(\theta)$ are uniformly and strongly convex. Let $\alpha_{n} \eqsim {C_{A}d \over n}\log{\theta_{\max} \over \delta}$. For $ 0 < \delta < 1 $, with probability at least $ 1 - \delta $, we have
\[ \sup_{\theta \in \Omega_n}\left|L(\theta)-L(\theta^*)-\left(\widehat{L}_{n}(\theta)- \widehat{L}_{n}(\theta^*)\right)\right|\le \alpha_n. \]
\begin{proof}[Proof of Lemma~\ref{thm:likelyhood_convergence}]
    Fix $0 < \delta < 1$. By the strong convexity assumption on $L(\theta)$ and $L_n(\theta)$, there exists a constant $\mu > 0$ such that for any $\theta \in \Theta$,
    \[ \mu\| \theta - \theta^* \|_2^2 \le L(\theta) - L(\theta^*); \quad \mu\| \theta - \widehat{\theta}_{\ML,n} \|_2^2 \le \widehat{L}_n(\theta) - \widehat{L}_n(\widehat{\theta}_{\ML,n}). \]
    By Lemma \ref{thm:empirical_likelihood_bound}, with probability at least $1 - \delta/2$, we have $\widehat{\theta}_{\ML,n} \in \widetilde{\Omega}_n$. Then for any $\theta \in \Omega_n$, we have
    \[ \begin{aligned}
        & \|\theta - \theta^*\|_2 \le \|\theta - \widehat{\theta}_{\ML,n}\|_2 + \|\widehat{\theta}_{\ML,n} - \theta^*\|_2 &&(\text{by triangular inequality}) \\
        & \le {1 \over \sqrt{\mu}}\sqrt{\widehat{L}_n(\theta) - \widehat{L}_n(\widehat{\theta}_{\ML,n})} + {1 \over \sqrt{\mu}}\sqrt{L(\widehat{\theta}_{\ML,n}) - L(\theta^*)} && (\text{by strong convexity}) \\
        & \lesssim \sqrt{\alpha_{n}} && (\text{by $\theta \in \Omega_n$ and $\widehat{\theta}_{\ML,n} \in \widetilde{\Omega}_n$ respectively}).
    \end{aligned} \]
    
    The above implies that with probability at least $1 - \delta/2$, we have $\Omega_n \subseteq \widebar{\Omega}_n$, where $ \widebar{\Omega}_{n} $ is a ball centered around $\theta^*$ with radius $\sqrt{\alpha}_n$:
    \[ \widebar{\Omega}_{n} \doteq \left\{ \theta \in \Theta: \|\theta - \theta^*\|_2 \le \sqrt{\alpha}_{n} \right\}. \]

    Define $\widebar{\cF}_{n} \doteq \left\{\log{\pi_{A}(A|S;\theta^\ast) \over \pi_{A}(A|S;\theta)}:\theta \in \widebar{\Omega}_{n}\right\}$. 
    Let $\|\bbP_{n} - \bbP\|_{\widebar{\cF}_n} \doteq \sup_{f \in \widebar{\cF}_n}|(\bbP_{n} - \bbP)(f)| $ be the envelop.
    By Talagrand's inequality, with probability at least $1-\delta/2$, we have for any $f \in \widebar{\cF}_n$,
    \begin{align}
        |(\bbP_n - \bbP)(f)| \lesssim \bbE\|\bbP_{n} - \bbP\|_{\widebar{\cF}_n} + \sqrt{\frac{1}{n}\left\{(\log C_{A})\bbE\|\bbP_{n} - \bbP\|_{\widebar{\cF}_n} + \sup_{f \in \widebar{\cF}_n}\bbE(f-\bbE f)^2\right\}\log\frac{2}{\delta}} +  {\log C_{A} \over n}\log\frac{2}{\delta}.
        \label{eq:talagrand_2}
    \end{align}
    For the variance term, we have
    \[ \begin{aligned}
        & \sigma_{\widebar{\cF}_{n}}^2 \doteq \sup_{f \in \widebar{\cF}_n}\bbE(f-\bbE f)^2 \le \sup_{\theta \in \widebar{\Omega}_n}\bbE\left(\log{\pi_{A}(A|S;\theta^*) \over \pi_{A}(A|S;\theta)}\right)^2 \\ 
        & \lesssim \sup_{\theta \in \widebar{\Omega}_n}\|\theta - \theta^*\|_2^2 && (\text{by Lipschitzness in Lemma \ref{lemma:lipschitz}}) \\
        & \le \alpha_n && (\text{by definition of $\widebar{\Omega}_n$}).
    \end{aligned} \]
    For the expected envelope, by Theorem \ref{thm:expected_sup}, we further have
    \[ \bbE\|\bbP_n - \bbP\|_{\widebar{\cF}_n} \lesssim 
    \sqrt{{d \over n}\sigma_{\widebar{\cF}_n}^2 \log{\sfL \over \sigma_{\widebar{\cF}_n}}} \vee \left\{{d \over n}\times 2\log(C_A)\log{\sfL \over \sigma_{\widebar{\cF}_n}}\right\}
    \lesssim \sqrt{{d \over n}\alpha_n} \lesssim \alpha_n.\]
    Therefore, the Talagrand's inequality \eqref{eq:talagrand_2} gives that for any $f \in \widebar{\cF}_n$
    \[ |(\bbP_n - \bbP)(f)| \lesssim \alpha_n. \]
    In other words, with probability at least $1 - \delta$,  $\Omega_n \subseteq \widebar{\Omega}_n$ corresponding to $\{f(\theta)\}_{\theta \in \Omega_n} \subseteq \widebar{\cF}_n$, we have
    \[ \sup_{\theta \in \Omega_n}\left|L(\theta)-L(\theta^*)-\left(\widehat{L}_{n}(\theta)- \widehat{L}_{n}(\theta^*)\right)\right| \le \sup_{f \in \widebar{\cF}_n}|(\bbP_n - \bbP)(f)| \lesssim \alpha_n. \]

\end{proof}

\section{Technical Lemmas}

\begin{lemma}\label{lemma:likelihood_H2}
Suppose $C_{A} > 2$ and $C_A \ge 1/\pi_A(i|s;\theta)$ for all $\theta \in \Theta$, $ s \in \bbS $ and $i \in s$. Then for any $\theta \in \Theta$:
\begin{equation}
 |L(\theta)-L(\theta^*)| \leq 2C_A H^2(\theta,\theta^*),
\end{equation}
\end{lemma}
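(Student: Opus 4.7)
The plan is to recognize $|L(\theta) - L(\theta^\ast)|$ as an expected Kullback--Leibler divergence, establish a pointwise bound of $\KL$ by squared Hellinger under the uniform lower bound on the conditional likelihood, and integrate over $S$. Since $A | S = s$ has the true conditional density $\pi_A(\cdot | s;\theta^\ast)$,
\[
L(\theta) - L(\theta^\ast) \;=\; \bbE_S\!\left[\KL\bigl(\pi_A(\cdot | S;\theta^\ast)\,\|\,\pi_A(\cdot | S;\theta)\bigr)\right] \;\geq\; 0
\]
by Gibbs' inequality, so the absolute value is redundant. It then suffices to prove, for each $s$ with $p = \pi_A(\cdot|s;\theta^\ast)$ and $q = \pi_A(\cdot|s;\theta)$, the pointwise bound $\KL(p\|q) \leq 2 C_A h^2(p,q)$; taking expectation over $S$ and invoking the definition $H^2(\theta,\theta^\ast) = \bbE_S[h^2(\pi_A(\cdot|S;\theta^\ast), \pi_A(\cdot|S;\theta))]$ then closes the argument.

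For the pointwise inequality I would use the classical chi-squared majorization followed by the density-ratio bound. Applying $\log t \leq t - 1$ at $t = p/q$ gives
\[
\KL(p\|q) \;\leq\; \chi^2(p\|q) \;=\; \int \frac{(p-q)^2}{q}.
\]
Factoring $(p-q)^2 = (\sqrt p - \sqrt q)^2(\sqrt p + \sqrt q)^2$ and introducing $u = \sqrt{p/q}$ turns this into $\chi^2(p\|q) = \int q\,(u-1)^2 (u+1)^2$. Under the hypotheses $p(a) \leq 1$ and $q(a) \geq 1/C_A$, we have $u \leq \sqrt{C_A}$, so $(u+1)^2$ is controlled by a constant multiple of $C_A$, and the integral collapses to at most a constant times $\int q(u-1)^2 = 2 h^2(p,q)$. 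With the hypothesis $C_A > 2$ the prefactor can be tightened to the claimed $2 C_A$.

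The main technical obstacle is pinning down the exact constant $2 C_A$. A pure pointwise inequality of the form $x\log x - x + 1 \leq C_A(\sqrt x - 1)^2$ on $x \in [1/C_A, C_A]$ actually fails at the upper endpoint when $C_A$ is close to $2$, so the sharp constant cannot be recovered from a single pointwise bound. Instead one would exploit the integrated cancellations $\int q(u - 1) = \int \sqrt{pq} - 1 = -h^2(p,q)$ and $\int q(u^2 - 1) = 0$ to eliminate the linear-in-$u$ contributions, leaving a clean expression in terms of $\int q(u-1)^2$. The hypothesis $C_A > 2$ is exactly what is needed to make the second-order check at $u = 1$ (namely $F''(1) = 2(C_A - 2) \geq 0$ for the auxiliary function $F(u) = C_A(u-1)^2 + (u^2-1) - 2u^2 \log u$) go through, after which a short case analysis in $u < 1$ versus $u > 1$ completes the proof.
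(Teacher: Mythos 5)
Your high-level architecture coincides with the paper's: write $L(\theta)-L(\theta^\ast)=\bbE_S\big[\KL\big(\pi_A(\cdot|S;\theta^\ast)\,\|\,\pi_A(\cdot|S;\theta)\big)\big]\ge 0$, prove a conditional bound $\KL(p\|q)\le 2C_A h^2(p,q)$ for each $s$, and integrate over $S$. The gap is in the conditional bound. After substituting $x=u^2$, your auxiliary inequality $F(u)=C_A(u-1)^2+(u^2-1)-2u^2\log u\ge 0$ is algebraically identical to the pointwise inequality $x\log x-x+1\le C_A(\sqrt{x}-1)^2$ that you yourself noted fails near the upper endpoint; adding the term $u^2-1$, which integrates to zero against $q$, changes nothing about which pointwise statement must be verified, and the identity $\int q(u-1)=-h^2(p,q)$ is never actually used in your argument. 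Concretely, for $C_A=2.1$ and $u=\sqrt{C_A}$ (consistent with your only stated constraints $p\le 1$ and $q\ge 1/C_A$) one computes $F(\sqrt{C_A})=C_A^2-2C_A^{3/2}+2C_A-1-C_A\log C_A\approx-0.034<0$, and the failure persists for $C_A$ up to roughly $2.3$. The sign condition $F''(1)=2(C_A-2)\ge 0$ only controls a neighborhood of $u=1$; since $F''(u)=2C_A-4-4\log u$ changes sign inside $[1,\sqrt{C_A}]$ whenever $C_A-2<\log C_A$, no case analysis in $u<1$ versus $u>1$ can establish $F\ge 0$ on the full range.

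The paper sidesteps this by citing a reverse-Pinsker/log-Sobolev inequality of Diaconis and Saloff-Coste: under the lower bound $q\ge 1/C_A$, $\KL(p\|q)\le\frac{\log(C_A-1)}{1-2/C_A}\{1-[1-h^2(p,q)]^2\}$, and then $\log(C_A-1)\le C_A-2$ together with $1-(1-h^2)^2\le 2h^2$ gives exactly $2C_Ah^2(p,q)$; this is where the hypothesis $C_A>2$ enters. Your $\chi^2$ route is correct but yields only $\KL(p\|q)\le\chi^2(p\|q)=\int q\,(u-1)^2(u+1)^2\,da\le 2(1+\sqrt{C_A})^2h^2(p,q)\le 8C_Ah^2(p,q)$, which would in fact suffice for every downstream use of this lemma (all stated with $\lesssim$), but it does not establish the constant $2C_A$ claimed in the statement. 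As proposed, the proof is incomplete.
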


\begin{proof}[Proof of Lemma~\ref{lemma:likelihood_H2}]
By definition,
\begin{align*}
    |L(\theta^*)-L(\theta)| &= \left|\mathbb{E}\left\{\mathbb{E}_{A}\left[\log\frac{\pi_A(A|S;\theta^*)}{\pi_A(A|S;\theta)}\middle|S\right]\right\}\right|.
\end{align*}
In particular, for a fixed $s \in [N]$, we have
\begin{align*}
&\mathbb{E}\left[\log\frac{\pi_A(A|S;\theta^*)}{\pi_A(A|S;\theta)}\middle|S=s\right] =\KL\Big(\pi_A(\cdot|s;\theta^*) \Big|\Big| \pi_A(\cdot|s;\theta)\Big)\\
&\leq\frac{\log(C_{A}-1)}{1-2/C_{A}}\Big\{1-\big[1-h^2(\pi_A(\cdot|s;\theta^*), \pi_A(\cdot|s;\theta))\big]^2\Big\} \\
& \quad (\text{by log-Sobolev inequality \citep[Theorem A.1]{diaconis1996logarithmic}}) \\
&= \frac{C_{A}\log(C_{A}-2+1)}{C_{A}-2}\{2h^{2} - (h^{2})^2\} ~ \Big(\text{with}~ h^{2} = h^2\big(\pi_A(\cdot|s;\theta^*), \pi_A(\cdot|s;\theta)\big)\Big) \\
&\le 2C_{A}h^2(\pi_A(\cdot|s;\theta^*), \pi_A(\cdot|s;\theta)).
\end{align*}
Therefore,
$$
|L(\theta^*)-L(\theta)| \le \bbE\left\{2C_{A}h^2(\pi_A(\cdot|S;\theta^*), \pi_A(\cdot|S;\theta))\right\} = 2C_{A}H^{2}(\theta^\ast,\theta).
$$
\end{proof}

\begin{lemma}\label{lemma:l1_distance_ub}
Let $C_{s^*} \doteq \frac{1}{\pi_S(s^*)}$ and $r_{s^{\ast}} \doteq \max_{j \in s^*}r_{j}$, then the following inequality holds for any $\theta_1, \theta_2 \in \Theta$:
\begin{equation*}
    \bigg| \cV(s^*;\theta_1)-\cV(s^*;\theta_2)\bigg| \leq r_{s^{\ast}}C_{s^*}\mathbb{E}_{S}\bigg[||\pi_A(\cdot|S;\theta_1)-\pi_A(\cdot|S;\theta_2)||_{1}\bigg]
\end{equation*}
where $||\cdot||_{1}$ denotes the $L^1$ norm. 
\end{lemma}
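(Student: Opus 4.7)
The plan is to unwind the definition of $\cV$, apply the triangle inequality with the uniform revenue bound $r_{s^\ast}$, and then convert the pointwise $L^1$ distance at $s^\ast$ into the expected $L^1$ distance over $S$ by exploiting $\pi_S(s^\ast) > 0$.

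First, since $r_{0} = 0$, the value function reads $\cV(s^\ast;\theta) = \sum_{i \in s^\ast} \pi_A(i|s^\ast;\theta)\,r_{i}$, so subtracting at $\theta_1$ and $\theta_2$ gives $\cV(s^\ast;\theta_1) - \cV(s^\ast;\theta_2) = \sum_{i \in s^\ast} r_i\,[\pi_A(i|s^\ast;\theta_1) - \pi_A(i|s^\ast;\theta_2)]$. Taking absolute values, applying the triangle inequality, and pulling out $r_{s^\ast} = \max_{j \in s^\ast} r_j$ yields
\[
\bigl|\cV(s^\ast;\theta_1) - \cV(s^\ast;\theta_2)\bigr| \le r_{s^\ast}\sum_{i \in s^\ast}\bigl|\pi_A(i|s^\ast;\theta_1) - \pi_A(i|s^\ast;\theta_2)\bigr| \le r_{s^\ast}\,\|\pi_A(\cdot|s^\ast;\theta_1) - \pi_A(\cdot|s^\ast;\theta_2)\|_1,
\]
where the last step includes the $i = 0$ no-purchase term, which only enlarges the right-hand side.

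Second, I would relate this pointwise quantity at $s^\ast$ to the expected $L^1$ distance. Since $\|\pi_A(\cdot|s;\theta_1) - \pi_A(\cdot|s;\theta_2)\|_1 \ge 0$ for every $s \in \bbS$, the sum $\bbE_S\bigl[\|\pi_A(\cdot|S;\theta_1) - \pi_A(\cdot|S;\theta_2)\|_1\bigr] = \sum_{s \in \bbS} \pi_S(s)\,\|\pi_A(\cdot|s;\theta_1) - \pi_A(\cdot|s;\theta_2)\|_1$ is bounded below by the single $s = s^\ast$ term:
\[
\pi_S(s^\ast)\,\|\pi_A(\cdot|s^\ast;\theta_1) - \pi_A(\cdot|s^\ast;\theta_2)\|_1 \le \bbE_S\bigl[\|\pi_A(\cdot|S;\theta_1) - \pi_A(\cdot|S;\theta_2)\|_1\bigr].
\]
Dividing by $\pi_S(s^\ast) > 0$ and identifying $C_{s^\ast} = 1/\pi_S(s^\ast)$ converts the pointwise $L^1$ distance at $s^\ast$ into $C_{s^\ast}$ times the expected $L^1$ distance. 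Combining this with the first display gives the claimed bound.

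There is no real obstacle here: the argument is purely algebraic, and the only nontrivial input is the positivity at optimum $\pi_S(s^\ast) > 0$, which is implicit in the definition of $C_{s^\ast}$. The lemma is essentially a change-of-measure trick that concentrates the generalization burden onto the single assortment $s^\ast$, which is precisely what makes the PASTA regret analysis go through under positivity only at the optimum.
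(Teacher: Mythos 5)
Your proof is correct and amounts to essentially the same argument as the paper's: both bound the value difference at $s^\ast$ by $r_{s^\ast}$ times the $L^1$ distance of the choice probabilities at $s^\ast$, and then pay the factor $C_{s^\ast} = 1/\pi_S(s^\ast)$ to replace that single pointwise term by the expectation over $S$. The paper merely packages the second step as an inverse-propensity-weighting identity followed by H\"older's inequality with $\bigl\|\mathbb{I}(S=s^\ast)/\pi_S(S)\bigr\|_{\infty} = C_{s^\ast}$, whereas you lower-bound the expectation by its $s = s^\ast$ summand directly; the chain of inequalities is identical.
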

\begin{proof}[Proof of Lemma~\ref{lemma:l1_distance_ub}]
\begin{align}
    \label{eqn:proof_sample_estimation}
    &\bigg| \cV(s^*;\theta_1)-\cV(s^*;\theta_2)\bigg| 
    = \bigg|\mathbb{E}_{S}\Biggr[\frac{\mathbb{I}(S=s^*)}{\pi_S(S)}\sum_{i\in S}r_{i,S}\bigg(\pi_A(i|S;\theta_1)-\pi_A(i|S;\theta_2)\bigg)\Biggr]\bigg| \\
    & \leq \mathbb{E}_{S}\Biggr[\bigg|\frac{\mathbb{I}(S=s^*)}{\pi_S(S)}\sum_{i\in S}r_{i,S}\bigg(\pi_A(i|S;\theta_1)-\pi_A(i|S;\theta_2)\bigg)\bigg|\Biggr]\\
    & \leq\bigg|\bigg| \frac{\mathbb{I}(S=s^*)}{\pi_S(S)} \bigg|\bigg|_{\infty}\mathbb{E}_{S}\Biggr[\mathbb{I}(S=s^*)\bigg|\sum_{i\in S}r_{i,S}\bigg(\pi_A(i|S;\theta_1)-\pi_A(i|S;\theta_2)\bigg)\bigg|\Biggr] \label{eqn:cauchy_ineqn}\\
    \label{eqn:infty_norm}
    &\leq C_{s^*}\mathbb{E}_{S}\Biggr[\mathbb{I}(S=s^*)\sum_{i\in S}\bigg|r_{i,S}\bigg(\pi_A(i|S;\theta_1)-\pi_A(i|S;\theta_2)\bigg)\bigg|\Biggr] \\
    &\leq r_{s^{\ast}}C_{s^*}\mathbb{E}_{S}\Biggr[\sum_{i\in S}\bigg|\bigg(\pi_A(i|S;\theta_1)-\pi_A(i|S;\theta_2)\bigg)\bigg|\Biggr]\\
    &=r_{s^{\ast}}C_{s^*}\mathbb{E}_{S}\bigg[||\pi_A(\cdot|S;\theta_1)-\pi_A(\cdot|S;\theta_2)||_{1}\bigg],
\end{align}
where Eq.~\eqref{eqn:proof_sample_estimation} comes from the sample-based estimation of $\bbE[R(s)]$ (Eq.~\eqref{eq:IPW}), Eq.~\eqref{eqn:cauchy_ineqn} comes from the H\"older's inequality, Eq.~\eqref{eqn:infty_norm} comes from the fact that $\bigg|\bigg| \frac{\mathbb{I}(S=s^*)}{\pi_S(S)} \bigg|\bigg|_{\infty} = C_{s^*}$ because $\frac{\mathbb{I}(S=s^*)}{\pi_S(S)}$ has the value zero everywhere except at $s=s^*$. The last equality follows from the definition of $L^1$ norm. 
\end{proof}

\begin{lemma}\label{lemma:l1_H_ub}
For any $\theta_1, \theta_2 \in \Theta$,
$$
\mathbb{E}_{S}\bigg[||\pi_A(\cdot|S;\theta_1)-\pi_A(\cdot|S;\theta_2)||_{1}\bigg] \leq 2\sqrt{2}\sqrt{H^2(\theta_1,\theta_2)}.
$$
\end{lemma}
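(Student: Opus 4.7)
The plan is to reduce the claim to the classical inequality between the $L^1$ (total variation) distance and the Hellinger distance applied pointwise in $S$, and then bring in the expectation via Jensen's inequality. Concretely, for any two densities $p, q$ on the action space I will write $|p - q| = |\sqrt{p} - \sqrt{q}| \cdot (\sqrt{p} + \sqrt{q})$ and apply the Cauchy--Schwarz inequality to obtain
\begin{equation*}
\int |p - q|\, da \; \le \; \left(\int (\sqrt{p} - \sqrt{q})^2\, da\right)^{1/2} \left(\int (\sqrt{p} + \sqrt{q})^2\, da\right)^{1/2}.
\end{equation*}
The second factor is bounded by $\sqrt{2\int(p+q)\,da} = 2$ by the elementary inequality $(a+b)^2 \le 2(a^2 + b^2)$, and the first factor equals $\sqrt{2}\, h(p,q)$ by the definition of the Hellinger distance in \eqref{def:h2}. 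This yields the pointwise bound $\|p - q\|_1 \le 2\sqrt{2}\, h(p,q)$.

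Applying this with $p = \pi_A(\cdot | s; \theta_1)$ and $q = \pi_A(\cdot | s; \theta_2)$ for each fixed $s$, I get
\begin{equation*}
\|\pi_A(\cdot | s; \theta_1) - \pi_A(\cdot | s; \theta_2)\|_1 \; \le \; 2\sqrt{2}\, \sqrt{h^2\bigl(\pi_A(\cdot | s; \theta_1), \pi_A(\cdot | s; \theta_2)\bigr)}.
\end{equation*}
Taking expectations over $S$ and applying Jensen's inequality to the concave function $\sqrt{\cdot}$, I can move the expectation inside the square root:
\begin{equation*}
\mathbb{E}_S\bigl[\sqrt{h^2(\pi_A(\cdot | S; \theta_1), \pi_A(\cdot | S; \theta_2))}\bigr] \; \le \; \sqrt{\mathbb{E}_S\bigl[h^2(\pi_A(\cdot | S; \theta_1), \pi_A(\cdot | S; \theta_2))\bigr]} \; = \; \sqrt{H^2(\theta_1, \theta_2)},
\end{equation*}
where the final equality uses the definition of the generalized squared Hellinger distance in \eqref{def:H2}. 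Combining the two displayed bounds gives the claim.

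There is no real obstacle in this proof; both steps are standard. The only thing to be slightly careful about is the constants: the factor $2\sqrt{2}$ (rather than $\sqrt{2}$) comes from the fact that the lemma is stated in terms of the $L^1$ norm rather than the total variation distance (which differs by a factor of $2$), and the definition of $h^2$ in \eqref{def:h2} carries a factor of $1/2$. Both conventions must be tracked carefully so that the final constant matches the statement.
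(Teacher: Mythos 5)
Your proof is correct and follows essentially the same route as the paper's: both rest on the pointwise bound $\|\pi_A(\cdot|s;\theta_1)-\pi_A(\cdot|s;\theta_2)\|_1 \le 2\sqrt{2}\,h\bigl(\pi_A(\cdot|s;\theta_1),\pi_A(\cdot|s;\theta_2)\bigr)$ followed by Jensen's inequality to pass from $\mathbb{E}_S[h]$ to $\sqrt{H^2}$. The only cosmetic difference is that you derive the $L^1$--Hellinger inequality from scratch via Cauchy--Schwarz (with the constants tracked correctly), whereas the paper cites it, and you apply Jensen to the concave square root rather than to the square --- equivalent formulations of the same step.
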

\begin{proof}[Proof of Lemma~\ref{lemma:l1_H_ub}]
We first use the facts that (1) $L^1=\frac{1}{2}\text{TV}$ where TV is the total variation distance and (2) $\text{TV} \leq \sqrt{2}h$~\cite{tv_hellinger} where $h$ is the Hellinger distance to have that for any $s \in \bbS$,
\begin{equation}\label{eqn:l1_hellinger_ineqn}
||\pi_A(\cdot|s;\theta_1)-\pi_A(\cdot|s;\theta_2)||_{1} \leq 2\sqrt{2}h\big(\pi_A(\cdot|s;\theta_1),\pi_A(\cdot|s;\theta_2)\big).
\end{equation}
From \eqref{eqn:l1_hellinger_ineqn}, we have
\begin{align*}
    ||\pi_A(\cdot|s;\theta_1)-\pi_A(\cdot|s;\theta_2)||_{1} &\leq 2\sqrt{2}h\big(\pi_A(\cdot|s;\theta_1), \pi_A(\cdot|s;\theta_2)\big),\\
    ||\pi_A(\cdot|s;\theta_1)-\pi_A(\cdot|s;\theta_2)||_{1}^2 &\leq 8h^2\big(\pi_A(\cdot|s;\theta_1), \pi_A(\cdot|s;\theta_2)\big).
\end{align*}
Taking expectation with respect to $S$ on both sides, we have 
\begin{align*}
    \mathbb{E}_{S}\bigg[||\pi_A(\cdot|S;\theta_1)-\pi_A(\cdot|S;\theta_2)||_{1}^2\bigg] &\leq 8\mathbb{E}_{S}\bigg[h^2(\pi_A(\cdot|S;\theta_1), \pi_A(\cdot|S;\theta_2))\bigg], \\
    \mathbb{E}_{S}\bigg[||\pi_A(\cdot|S;\theta_1)-\pi_A(\cdot|S;\theta_2)||_{1}^2\bigg] &\leq 8H^2(\theta_1,\theta_2).
\end{align*}
By the Jensen's inequality, we have 
\begin{align*}
    \bigg[\mathbb{E}_{S}||\pi_A(\cdot|S;\theta_1)-\pi_A(\cdot|S;\theta_2)||_{1}\bigg]^2 \leq
    \mathbb{E}_{S}\bigg[||\pi_A(\cdot|S;\theta_1)-\pi_A(\cdot|S;\theta_2)||_{1}^2\bigg].
\end{align*}
This implies that 
\begin{align*}
    \mathbb{E}_{S}\bigg[||\pi_A(\cdot|S;\theta_1)-\pi_A(\cdot|S;\theta_2)||_{1}\bigg] \leq 2\sqrt{2}\sqrt{H^2(\theta_1,\theta_2)}.
\end{align*}
\end{proof}

\begin{lemma}[Properties of $H$ and $H^2$]\label{lemma:property_H}
For any $\theta_1, \theta_2, \theta_3 \in \Theta$, the following inequalities hold:
\begin{eqnarray*}
    H(\theta_1,\theta_2) &\leq& H(\theta_1,\theta_3) + H(\theta_2,p_3); \\
    \big(H(\theta_1,\theta_2)\big)^2 &\leq& H^2(\theta_1,\theta_2) \leq H(\theta_1,\theta_2); \\
     H^2(\theta_1,\theta_2) &\le& 2H^2(\theta_1,\theta_3) + 2H^2(\theta_2,\theta_3).
\end{eqnarray*}
\end{lemma}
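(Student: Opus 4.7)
The plan is to verify each of the three inequalities by reducing to pointwise (conditional on $X$) properties of the Hellinger distance $h$ and then taking expectation over $X$. Throughout, I use that $h$ is a bona fide metric on probability densities (triangle inequality, nonnegativity) and that $0 \le h \le 1$ since densities integrate to one.

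First I would establish the triangle inequality for $H$. Since the Hellinger distance $h$ is a metric on the space of probability densities, for every $x$ we have $h(p(\cdot|x;\theta_1),p(\cdot|x;\theta_2)) \le h(p(\cdot|x;\theta_1),p(\cdot|x;\theta_3)) + h(p(\cdot|x;\theta_2),p(\cdot|x;\theta_3))$. Taking expectation with respect to $X$ and using linearity of expectation yields $H(\theta_1,\theta_2) \le H(\theta_1,\theta_3) + H(\theta_2,\theta_3)$ directly from the definition of $H$.

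Next I would prove the two-sided comparison between $H$ and $H^2$. The lower bound $H(\theta_1,\theta_2)^2 \le H^2(\theta_1,\theta_2)$ is an immediate application of Jensen's inequality to the convex function $t \mapsto t^2$: indeed, $H(\theta_1,\theta_2) = \mathbb{E}_X[h(p(\cdot|X;\theta_1),p(\cdot|X;\theta_2))]$, so $H(\theta_1,\theta_2)^2 \le \mathbb{E}_X[h^2(p(\cdot|X;\theta_1),p(\cdot|X;\theta_2))] = H^2(\theta_1,\theta_2)$. For the upper bound $H^2(\theta_1,\theta_2) \le H(\theta_1,\theta_2)$, I would use that $h \in [0,1]$ pointwise (since $h^2 \le 1$ follows from densities integrating to $1$), hence $h^2 \le h$ pointwise, and take expectation over $X$.

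Finally, for the quasi-triangle inequality on $H^2$, I would apply the pointwise triangle inequality for $h$ and the elementary bound $(a+b)^2 \le 2a^2 + 2b^2$ valid for all reals. Concretely, for each $x$, writing $h_{ij} = h(p(\cdot|x;\theta_i),p(\cdot|x;\theta_j))$, we have $h_{12}^2 \le (h_{13} + h_{23})^2 \le 2h_{13}^2 + 2h_{23}^2$. Taking expectation over $X$ and using linearity yields $H^2(\theta_1,\theta_2) \le 2H^2(\theta_1,\theta_3) + 2H^2(\theta_2,\theta_3)$. No step here is technically delicate; the only mild care is in the upper bound $H^2 \le H$, where I would remark that $h \le 1$ because $\int (\sqrt{p_1}-\sqrt{p_2})^2 \le \int(p_1 + p_2) = 2$ implies $h^2 \le 1$ and hence $h \le 1$. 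There is no real obstacle: the argument is essentially three applications of standard Jensen / triangle / elementary-inequality reasoning applied at the conditional level and then pushed through the outer expectation.
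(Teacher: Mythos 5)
Your proof is correct and follows essentially the same route as the paper's: pointwise Hellinger-metric properties (triangle inequality, $0\le h\le 1$) plus Jensen, pushed through the outer expectation over $X$. The only cosmetic difference is in the third inequality, where you apply the triangle inequality for $h$ followed by $(a+b)^2\le 2a^2+2b^2$, whereas the paper applies $(a-b)^2\le 2(a-c)^2+2(b-c)^2$ directly to $\sqrt{p_i}$ inside the integral defining $h^2$; both are equally valid.
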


\begin{proof}[Proof of Lemma~\ref{lemma:property_H}]
For ease of notation, for $i=1,2,3$, we use $p_i$ to denote $\pi_A$ parametrized by $\theta_i$, i.e., $p_i(a|s) = \pi_A(a|s;\theta_i)$.

(1) Notice that for any $s \in \mathbb{S}$, $\sqrt{h^2(p_1(\cdot|s),p_2(\cdot|s))}$ is just the regular Hellinger distance that satisfies the triangular inequality. Hence we have
\begin{align}\label{eqn:proof_h2_tri}
    \sqrt{h^2(p_1(\cdot|s),p_2(\cdot|s))} &\leq \sqrt{h^2(p_1(\cdot|s),p_3(\cdot|s))} + \sqrt{h^2(p_2(\cdot|s),p_3(\cdot|s))}. 
\end{align}
Take expectation of both side of Eq.~\eqref{eqn:proof_h2_tri} with respect to $S$, we have 
\begin{align*}
\mathbb{E}_{S}\Big[\sqrt{h^2(p_1(\cdot|S),p_2(\cdot|S))}\Big] &\leq \mathbb{E}_{S}\Big[\sqrt{h^2(p_1(\cdot|S),p_3(\cdot|S))}\Big]  + \mathbb{E}_{S}\Big[\sqrt{h^2(p_2(\cdot|S),p_3(\cdot|S))}\Big].
\end{align*}
By the definition of $H$, this means that
\begin{align*}
H(\theta_1,\theta_2) &\leq H(\theta_1,\theta_3) + H(\theta_2,\theta_3).
\end{align*}

(2) For the first inequality, by applying the Jensen's inequality, we have
\begin{equation}
\left(\mathbb{E}\bigg[\sqrt{h^2(p_1(\cdot|S),p_2(\cdot|S))}\bigg]\right)^2 \leq \mathbb{E}\bigg[\left(\sqrt{h^2(p_1(\cdot|S),p_2(\cdot|S))}\right)^2\bigg].
\end{equation} 
Then the inequality follows.

For the second inequality, we have 
\begin{align}
    H(\theta_1,\theta_2) - H^2(\theta_1,\theta_2) =&\mathbb{E}_{S}\Big[h(p_1(\cdot|S),p_2(\cdot|S))-h^2(p_1(\cdot|S),p_2(\cdot|S)) \Big] \\
    =&\mathbb{E}_{S}\Big[\Big(1-h(p_1(\cdot|S),p_2(\cdot|S))\Big)h(p_1(\cdot|S),p_2(\cdot|S)) \Big]. \label{eqn:proof_tri}
\end{align}
Note that for any $s$, $\Big(1-h(p_1(\cdot|s),p_2(\cdot|s))\Big)$ is a non-negative function because the Hellinger distance is no larger than $1$, and $h(p_1(\cdot|s),p_2(\cdot|s))$ is also a positive function because it is a metric. We have Eq.~\eqref{eqn:proof_tri} as the expectation of non-negative functions, and thus we have
$$
\mathbb{E}_{S}\Big[\Big(1-h(p_1(\cdot|S),p_2(\cdot|S))\Big)h(p_1(\cdot|S),p_2(\cdot|S)) \Big] \geq 0.
$$
Then the inequality follows. 

(3) Notice that for any $a,b,c$, we have $(a-b)^2 \leq 2(a-c)^2+2(b-c)^2$. With this fact, we have that for any $s$,
\begin{align}
    h^2(p_1(\cdot|s),p_2(\cdot|s))&=\frac{1}{2}\int\left( \sqrt{p_1(a|s)}-\sqrt{p_2(a|s)} \right)^2 da \\
    &\leq \frac{1}{2}\int 2\left( \sqrt{p_1(a|s)}-\sqrt{p_3(a|s)} \right)^2 + 2\left( \sqrt{p_2(a|s)}-\sqrt{p_3(a|s)} \right)^2 da \\
    &= 2\cdot \frac{1}{2}\int\left( \sqrt{p_1(a|s)}-\sqrt{p_3(a|s)} \right)^2 da + 2\cdot \frac{1}{2}\int\left( \sqrt{p_2(a|s)}-\sqrt{p_3(a|s)} \right)^2 da\\
    & = 2 h^2(p_1(\cdot|s),p_3(\cdot|s)) + 2 h^2(p_2(\cdot|s),p_3(\cdot|s)).
\end{align}
This implies that 
\begin{equation}
     H^2(\theta_1,\theta_2) \le 2H^2(\theta_1,\theta_3) + 2H^2(\theta_2,\theta_3).
\end{equation}
\end{proof}

\begin{lemma}[Log-Density Ratio Variance Bound]
    \label{lemma:log-density-ratio_variance}
    Suppose $X \sim p$ is an $\bbR$-valued random variable with probability density function $p$, and $p_1, p_2$ are two other probability density functions for $X$ such that $p_1$ and $p_2$ are uniformly bounded from below by $C^{-1}$ on the support of $p$. Then we have
    \[ \bbE_{X \sim p}\left(\log{p_1(X) \over p_2(X)}\right)^2 \le 2Ch^2(p_1, p_2), \]
    where $h^2$ is the squared Hellinger distance in \eqref{def:h2}.
\end{lemma}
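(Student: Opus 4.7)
The plan is to reduce the bound to a pointwise inequality between the squared log-ratio and the Hellinger integrand, and then integrate. The key pointwise estimate is
\[
\left(\log\frac{a}{b}\right)^2 \;\le\; \left(\sqrt{a/b} - \sqrt{b/a}\right)^2 \;=\; \frac{(a-b)^2}{ab}, \qquad \forall\,a,b>0,
\]
which holds without any boundedness assumption on $a,b$. To prove it, I would set $r = a/b$ and show $f(r) := (\sqrt{r} - 1/\sqrt{r})^2 - (\log r)^2 \ge 0$ for all $r > 0$ via the initial conditions $f(1) = f'(1) = 0$ together with $f''(r) = \frac{2}{r^3}(1 - r + r\log r) > 0$ for $r \ne 1$; the non-negativity of $g(r) = 1 - r + r\log r$ is immediate from $g'(r) = \log r$ and $g(1) = 0$, which forces $r=1$ to be its unique global minimum.

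Next, I would integrate the pointwise bound against the sampling density $p$. In the application (as used in the proof of Lemma~\ref{thm:empirical_likelihood_bound}), the outer expectation is taken with $A \sim \pi_A(\cdot|S;\theta^*) = p_1$, so I interpret $p = p_1$; then one factor of $p_1(x)$ cancels against the $p_1(x)$ in the denominator of $(a-b)^2/(ab)$, leaving
\[
\bbE_{X \sim p_1}\!\left[\left(\log\tfrac{p_1(X)}{p_2(X)}\right)^2\right] \;\le\; \int \frac{(p_1(x)-p_2(x))^2}{p_2(x)}\,dx.
\]
I would then factor $(p_1-p_2)^2 = (\sqrt{p_1}-\sqrt{p_2})^2(\sqrt{p_1}+\sqrt{p_2})^2$ and rewrite $(\sqrt{p_1}+\sqrt{p_2})^2/p_2 = (1+\sqrt{p_1/p_2})^2$. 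Using the hypothesis $p_2 \ge 1/C$ together with $p_1 \le 1$ (which holds in the pmf setting of the application), the ratio $p_1/p_2$ is bounded by $C$, so $(1+\sqrt{p_1/p_2})^2 \lesssim C$ pointwise. Substituting gives
\[
\bbE_{X \sim p_1}\!\left[\left(\log\tfrac{p_1}{p_2}\right)^2\right] \;\lesssim\; C \int (\sqrt{p_1}-\sqrt{p_2})^2\,dx \;=\; 2C\,h^2(p_1,p_2),
\]
which matches the claim up to the implicit constant absorbed in the $\lesssim$ notation used downstream.

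The main technical obstacle is the opening pointwise inequality $(\log r)^2 \le (\sqrt{r}-1/\sqrt{r})^2$: while elementary, it is not a standard named inequality, and the cleanest verification I see is the convexity/second-derivative argument sketched above. A less sharp but more mechanical alternative is to apply the mean value theorem to $\log(\cdot)$ on the interval $[1/\sqrt{C},\sqrt{C}]$ (where both $\sqrt{p_1}$ and $\sqrt{p_2}$ lie in the discrete application), yielding the cruder pointwise estimate $(\log(p_1/p_2))^2 \le 4C(\sqrt{p_1}-\sqrt{p_2})^2$; this produces an $8C\,h^2$ bound of the same order, which also suffices for the subsequent variance estimate in Lemma~\ref{thm:empirical_likelihood_bound}.
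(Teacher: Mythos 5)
Your proof is correct in substance and follows the same template as the paper's: establish a pointwise inequality bounding the squared log-ratio by a multiple of $C$ times the Hellinger integrand $(\sqrt{p_1}-\sqrt{p_2})^2$, then integrate. The ingredients differ. The paper starts from $\log x \le 2(\sqrt{x}-1)$, applies it to both $x$ and $1/x$ to obtain $(\log x)^2 \le 4\max\{(\sqrt{x}-1)^2,(\sqrt{1/x}-1)^2\}$, rewrites $(\sqrt{p_1/p_2}-1)^2 = (\sqrt{p_1}-\sqrt{p_2})^2/p_2$, and invokes $\max\{1/p_1,1/p_2\}\le C$; this keeps the sampling density $p$ general and delivers the stated constant $2C$ exactly. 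Your symmetric inequality $(\log r)^2 \le (\sqrt{r}-1/\sqrt{r})^2$ is sharper pointwise and avoids the case split (and your convexity verification of it is sound), but the subsequent cancellation step is what costs you generality: by taking $p=p_1$ you prove only a special case of the lemma, and you additionally need $p_1\le 1$ to bound $p_1/p_2\le C$, recovering the claim only up to a universal constant.

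Two things to patch. First, the lemma is stated for an arbitrary sampling density $p$, not for $p=p_1$; your own pointwise bound already gives the general case without any cancellation, since
\begin{equation*}
\frac{(p_1-p_2)^2}{p_1p_2} \;=\; \big(\sqrt{p_1}-\sqrt{p_2}\big)^2\left(\frac{1}{p_1}+\frac{1}{p_2}+\frac{2}{\sqrt{p_1p_2}}\right) \;\le\; 4C\big(\sqrt{p_1}-\sqrt{p_2}\big)^2,
\end{equation*}
and integrating against $p$ yields an $8C\,h^2(p_1,p_2)$ bound of the same order; your mean-value-theorem fallback gives the same conclusion and is arguably the cleanest route. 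Second, both your argument and the paper's silently use that the sampling density is bounded by $1$ when passing from $\int(\cdot)\,p(x)\,\rd x$ to $\int(\cdot)\,\rd x$; this is harmless here because in the application $p$, $p_1$, $p_2$ are pmfs of the discrete choice variable $A$, but it should be stated. Neither issue affects the downstream use in Lemma~\ref{thm:empirical_likelihood_bound}, where only the order in $C$ matters.
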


\begin{proof}[Proof of Lemma \ref{lemma:log-density-ratio_variance}]
    By $\log(x) \le 2(\sqrt{x} - 1)$ for any $x \ge 0$, we have
    \[ \begin{aligned}
        & \bbE_{X \sim p}\left(\log{p_1(X) \over p_2(X)}\right)^2 = \int \left( \log{p_1(X) \over p_2(X)} \right)^2 p(x)\rd x \\
        & \le 4 \int \max\left\{ \left( \sqrt{p_1(x) \over p_2(x)}-1 \right)^2, \left( \sqrt{p_2(x) \over p_1(x)}-1 \right)^2 \right\} p(x) \rd x \\
        & = 4 \int \max\left\{ {1 \over p_2(x)}\left(\sqrt{p_1(x)}-\sqrt{p_2(x)} \right)^2, {1 \over p_1(x)}\left( \sqrt{p_2(x)}-\sqrt{p_1(x)} \right)^2 \right\} p(x) \rd x \\
        & \le 4C \int \left( \sqrt{p_1(x)} - \sqrt{p_2(x)} \right)^2\rd x \\
        & = 2Ch^2(p_1, p_2).
    \end{aligned} \]
\end{proof}

\begin{theorem}[{\citet[Theorem 7.13]{sen2018gentle}}] \label{thm:expected_sup}
    Let $\cF$ be a measurable function class, such that $\sup_{f \in \cF}\|f\|_{\infty} \le f_{\max} $ for some constant $f_{\max} < +\infty$. Assume that for $A \ge ef_{\max}$, $d \ge 2$, $0 \le \epsilon \le f_{\max}$, and every finitely supported probability measure $\bbQ$, we have the covering number~\cite{sen2018gentle} as:
    \begin{align}
        \cN(\epsilon, \cF, L^2(\bbQ)) \lesssim \left({A \over \epsilon}\right)^{d}.
        \label{eq:expected_sup}
    \end{align}
    Let $\sigma_{\cF}^2 \doteq \sup_{f \in \cF}\bbE(f - \bbE f)^2$.
    Then we have
    \[ \bbE\|\bbP_n - \bbP\|_{\cF} \lesssim \sqrt{{d \over n}\sigma_{\cF}^2\log{A \over \sigma_{\cF}}} \vee \left\{{d \over n}f_{\max}\log{A \over \sigma_{\cF}}\right\}. \]
\end{theorem}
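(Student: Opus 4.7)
The plan is to prove this classical empirical process bound via symmetrization followed by chaining with Bernstein-type concentration at each level. First I would apply the Gin\'e--Zinn symmetrization inequality to reduce bounding $\bbE\|\bbP_n-\bbP\|_\cF$ to controlling the Rademacher process $\bbE_\epsilon \sup_{f \in \cF} \bigl| \tfrac{1}{n}\sum_{i=1}^n \epsilon_i f(X_i) \bigr|$ conditionally on the data, at the cost of only a universal constant factor.

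Next, I would run Dudley-style chaining on this symmetrized process. For two functions $f,g \in \cF$, the increment $\tfrac{1}{n}\sum \epsilon_i (f-g)(X_i)$ is a sum of independent bounded mean-zero terms with conditional variance $\|f-g\|_{L^2(\bbP_n)}^2/n$ and range $2 f_{\max}/n$, so Bernstein's inequality delivers a mixed sub-Gaussian/sub-exponential tail. At each dyadic scale $\epsilon_k = 2^{-k}\sigma_\cF$ one chooses an $L^2(\bbP_n)$-$\epsilon_k$-cover of $\cF$ of cardinality at most $(A/\epsilon_k)^d$ using the hypothesized covering-number bound; the standard union-bound + telescoping argument then contributes a Gaussian-regime term of order $\epsilon_k \sqrt{d \log(A/\epsilon_k)/n}$ and a sub-exponential-regime term of order $f_{\max} d \log(A/\epsilon_k)/n$ per level.

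The crucial localization step is to terminate the chain at the scale $\epsilon_\ast \asymp \sigma_\cF$: since the $L^2$-diameter of $\cF$ (after centering) is essentially $\sigma_\cF$, there is no gain from chaining below this level. Summing the geometric-series contributions over scales above $\sigma_\cF$ gives the Dudley entropy integral $\int_0^{\sigma_\cF} \sqrt{d\log(A/\epsilon)}\,d\epsilon \lesssim \sigma_\cF \sqrt{d \log(A/\sigma_\cF)}$ in the Gaussian regime and a corresponding $(d f_{\max}/n)\log(A/\sigma_\cF)$ in the sub-exponential regime. Dividing the first by $\sqrt{n}$ yields the two terms inside the $\vee$ of the stated bound, and the maximum arises simply because one of the two regimes in Bernstein's inequality is always dominant.

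The main obstacle will be the mismatch between the \emph{random} covering radius measured in $L^2(\bbP_n)$ that naturally governs chaining and the \emph{deterministic} variance parameter $\sigma_\cF^2$ defined under $\bbP$ that appears in the target bound. The standard remedy is a preliminary Koltchinskii--Pollard-style contraction step showing that, with overwhelming probability, $\sup_{f \in \cF}\bigl|\|f\|_{L^2(\bbP_n)}^2 - \|f\|_{L^2(\bbP)}^2\bigr|$ is small compared to $\sigma_\cF^2$; this allows replacement of the empirical diameter by $\sigma_\cF$ in the stopping scale above. The sub-exponential term in the final bound is precisely the price paid for this empirical-to-population passage. Taking outer expectation over $X_1,\dots,X_n$ at the end then yields the stated inequality.
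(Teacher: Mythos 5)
Your overall architecture (symmetrization, then chaining against the covering numbers, then a separate argument to replace the random $L^{2}(\bbP_n)$ radius by the population quantity $\sigma_{\cF}$) is the right shape, and it genuinely differs from the paper's execution. The paper uses only sub-Gaussian Dudley chaining conditional on the data, obtaining a bound of order $\sqrt{d/n}\,\sigma_{\cF,n}\sqrt{\log(A/\sigma_{\cF,n})}$ with the \emph{empirical} radius $\sigma_{\cF,n}^{2}=\sup_{f}\|f\|_{L^{2}(\bbP_n)}^{2}$; it then pulls the expectation inside by concavity, bounds $\bbE(\sigma_{\cF,n}^{2})\le \sigma_{\cF}^{2}+\bbE\|\bbP_n-\bbP\|_{\cF^{2}}$, applies the \emph{same} Dudley bound to the squared class $\cF^{2}$ (whose covering numbers inherit \eqref{eq:expected_sup} up to a $2f_{\max}$ Lipschitz factor), and closes a self-bounding quadratic inequality in $B=\sqrt{\tfrac12\bbE(\sigma_{\cF,n}^{2})\log(A^{2}/\bbE(\sigma_{\cF,n}^{2}))}$; the $\vee$ of the two terms is exactly the pair of roots of that quadratic. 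In your plan the sub-exponential term instead comes from Bernstein increments at each chaining level, which is a legitimate alternative (Talagrand-style) way to generate it.

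The genuine gap is in your ``main obstacle'' paragraph. You assert that a preliminary contraction step shows $\sup_{f\in\cF}\bigl|\|f\|_{L^{2}(\bbP_n)}^{2}-\|f\|_{L^{2}(\bbP)}^{2}\bigr|$ is small compared to $\sigma_{\cF}^{2}$ with overwhelming probability. That statement is simply false in the regime where the second term of the target bound dominates, i.e.\ when $\tfrac{d}{n}f_{\max}^{2}\log(A/\sigma_{\cF})\gg\sigma_{\cF}^{2}$: there the empirical variance can exceed $\sigma_{\cF}^{2}$ by a large factor, and the correct statement is $\bbE(\sigma_{\cF,n}^{2})\lesssim\sigma_{\cF}^{2}+\tfrac{d}{n}f_{\max}^{2}\log(A/\sigma_{\cF})$, with the extra term carried through the chaining. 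Worse, establishing even that weaker statement is itself an expected-supremum bound for the class $\cF^{2}$ of exactly the type you are trying to prove, so invoking it as a ``preliminary'' step is circular unless you close the loop with a fixed-point/self-bounding argument (as the paper does) or appeal to Talagrand's concentration inequality for the squared class. You correctly identify that the sub-exponential term is the price of the empirical-to-population passage, but the mechanism that actually extracts that price is missing from your write-up; supplying it is the substantive content of the proof.
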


\begin{proof}[Proof of Theorem \ref{thm:expected_sup}] 
    In this proof, we denote $X$ as the underlying random variable, $\{ X_i \}_{i=1}^n$ are $n$ i.i.d. copies of $X$, and for any $f \in \cF$, $\bbP_n(f) \doteq {1 \over n}\sum_{i=1}^n f(X_i)$, $\bbP(f) \doteq \bbE[f(X)]$.
    Without loss of generality, assume that $0 \in \cF$, and for any $f \in \cF$, $\bbP(f) = 0$.
    Let $\{\epsilon_{i}\}_{i = 1}^n$ be i.i.d. Rademacher random variables that are independent of $\{X_i\}_{i=1}^n$. By symmetrization, we have
    \begin{align}
        \bbE\|\bbP_n - \bbP\|_\cF \le 2\bbE\sup_{f \in \cF}\left|{1 \over n}\sum_{i=1}^n \epsilon_i f(X_i) \right|.
        \label{eq:expected_sup_1}
    \end{align}
    Conditional on $\{X_i\}_{i=1}^n$, by Dudley's entropy bound, we have
    \begin{align}
        \bbE_{\epsilon}\sup_{f \in \cF}\left|\sum_{i=1}^n \epsilon_i {f(X_i) \over \sqrt{n}} \right| \le \int_{0}^{\sigma_{\cF,n}}\sqrt{\log\cN(u, \cF, L^2(\bbP_n))}\rd u,
        \label{eq:expected_sup_2}
    \end{align}
    where we consider the $L^2(\bbP_n)$ as the metric on $\cF$, that is, for any $f \in \cF$, $\|f\|_{L^2(\bbP_n)}^2 = {1 \over n}\sum_{i=1}^nf(X_i)^2 $. We also denote $\sigma_{\cF,n}^2 \doteq \sup_{f \in \cF}\|f\|_{L^2(\bbP_n)}^2 $, and $\bbE_{\epsilon}$ to emphasize that the expectation is taken with respect to the Rademacher random variables $\{\epsilon_{i}\}_{i=1}^n $ but holding $\{X_i\}_{i=1}^n$ as fixed. By \eqref{eq:expected_sup} with $\bbQ$ chosen as $\bbP_n$, we have
    \begin{align}
        \begin{aligned}
        & \eqref{eq:expected_sup_2} \lesssim \sqrt{d}\int_{0}^{\sigma_{\cF,n}}\sqrt{\log{A \over \delta}} \rd \delta \\
        & \le 2\sqrt{d}\sigma_{\cF,n}\sqrt{\log{A \over \sigma_{\cF,n}}} && (\text{by Lemma \ref{lem:int}}).
        \end{aligned}
        \label{eq:expected_sup_3}
    \end{align}
    In particular, $\log(A/\sigma_{\cF,n}) \ge \log(A/f_{\max}) \ge 1 $ by assumption, which satisfies the condition for Lemma \ref{lem:int}.
    Combining \eqref{eq:expected_sup_1}, \eqref{eq:expected_sup_2} and \eqref{eq:expected_sup_3}, we have
    \begin{align}
        \begin{aligned}
            & \bbE\|\bbP_n - \bbP\|_\cF \lesssim \sqrt{d \over n} \times \bbE\sqrt{\sigma_{\cF,n}^2\log{A \over \sigma_{\cF,n}}} \\
            & \le \sqrt{d \over n}\times\sqrt{{1 \over 2}\bbE(\sigma_{\cF,n}^2)\log{A^2\over \bbE(\sigma_{\cF,n}^2)}} && \left(\text{by the concavity of $u \mapsto \sqrt{u\log{A^2 \over u}}$}\right),
        \end{aligned}
        \label{eq:expected_sup_4}
    \end{align}
    where $\bbE$ takes expectation with respect to $\{ X_{i} \}_{i=1}^n $. Notice that
    \[ \bbE(\sigma_{\cF,n}^2) = \bbE\sup_{f \in \cF}\bbP_n(f^2) \le \bbE\sup_{f \in \cF}\Big\{|(\bbP_n-\bbP)(f^2)| + \bbP(f^2)\Big\} \le \bbE\|\bbP_n - \bbP\|_{\cF^2} + \sigma_{\cF}^2, \]
    where we define $\cF^2 \doteq \{ f^2: f \in \cF\} $. We aim to apply \eqref{eq:expected_sup_1}, \eqref{eq:expected_sup_2}, \eqref{eq:expected_sup_3}, \eqref{eq:expected_sup_4} to $\cF^2$. Notice that $\sup_{f^2 \in \cF^2}\|f^2\|_\infty \le f_{\max}^2$, $\sigma_{\cF^2,n}^2 \doteq \sup_{f^2 \in \cF^2}\|f^2\|_{L^2(\bbP_n)}^2 \le f_{\max}^2\sup_{f \in \cF}\|f\|_{L^2(\bbP_n)}^2 = f_{\max}^2 \sigma_{\cF,n}^2$. By $\|f^2 - g^2\|_{L^2(\bbP_n)} = \sqrt{\bbP_{n}[(f+g)^2(f-g)^2]} \le 2f_{\max}\sqrt{\bbP_{n}(f-g)^2} = 2f_{\max}\|f-g\|_{L^2(\bbP_n)}^2 $ for any $f, g \in \cF$, we further have
    \[ \cN(2f_{\max}\epsilon,\cF^2,L^2(\bbP_n)) \le \cN(\epsilon,\cF,L^2(\bbP_n)) \lesssim \left({A \over \epsilon}\right)^d. \]
    Therefore,
    applying \eqref{eq:expected_sup_1}, \eqref{eq:expected_sup_2}, \eqref{eq:expected_sup_3} and \eqref{eq:expected_sup_4} to $\cF^2$, we have
    \[ \bbE\|\bbP_n - \bbP\|_{\cF^2} \lesssim \sqrt{d \over n}\times \sqrt{{1 \over 2}f_{\max}^2\bbE(\sigma_{\cF,n}^2)\log{4A^2 \over \bbE(\sigma_{\cF,n}^2)}}. \]
    Define $B \doteq \sqrt{{1 \over 2}\bbE(\sigma_{\cF,n}^2)\log{A^2 \over \bbE(\sigma_{\cF,n}^2)}}$. Then we have
    \[ \bbE(\sigma_{\cF,n}^2) - \sigma_{\cF}^2 \lesssim \sqrt{d \over n} f_{\max}B. \]
    By $u \mapsto u\log{A^2 \over u}$ is non-decreasing on $u \in (0,A^2/e]$ and non-increasing on $u \in [A^2/e,+\infty)$, we have \[ \begin{aligned}
        & B^2 = {1 \over 2}\bbE(\sigma_{\cF,n}^2)\log{A^2 \over \bbE(\sigma_{\cF,n}^2)} \lesssim {1 \over 2}\left\{ \left(\sigma_{\cF}^2 + \sqrt{d \over n}f_{\max}B\right) \wedge {A^2 \over e}\right\}\log{A^2 \over \left(\sigma_{\cF}^2 + \sqrt{d \over n}f_{\max}B\right)\wedge {A^2 \over e}}.
    \end{aligned} \]
    In particular, $B \le \sqrt{{A^2 \over 2e}}$, $\sigma_{\cF}^2 \le f_{\max}^2 \le A^2/e^2 < A^2/e $. Then the cap $A^2/e$ is inactive as $d/n \to 0$ asymptotically. Therefore,
    \[ B^2 \lesssim \left(\sigma_{\cF}^2 + \sqrt{d \over n}f_{\max}B\right)\log{A \over \sigma_{\cF}}. \]
    In particular, $B$ is bounded by both roots of the corresponding quadratic equation:
    \[ B \lesssim {1 \over 2}\left\{\sqrt{d \over n}f_{\max}\log{A \over \sigma_{\cF}} + \sqrt{{d \over n}f_{\max}^2\left(\log{A \over \sigma_{\cF}}\right)^2 + 4\sigma_{\cF}^2\log{A \over \sigma_{\cF}}}\right\} \lesssim \left\{\sqrt{d \over n}f_{\max}\log{A \over \sigma_{\cF}}\right\} \vee \sqrt{\sigma_{\cF}^2\log{A \over \sigma_{\cF}}}. \]
    Combined with \eqref{eq:expected_sup_4}, we further have
    \[ \bbE\|\bbP_n - \bbP\|_{\cF} \lesssim \sqrt{d \over n}B \lesssim \sqrt{{d \over n}\sigma_{\cF}^2\log{A \over \sigma_{\cF}}} \vee \left\{{d \over n}f_{\max}\log{A \over \sigma_{\cF}}\right\}. \]
\end{proof}

\begin{lemma}
    \label{lem:int}
    Suppose $a,A > 0$ such that $\log(A/a) \ge 1$. Then we have
    \[ \int_{0}^{a}\sqrt{\log{A \over u}}\rd u \le 2a\sqrt{\log{A \over a}}. \]
\end{lemma}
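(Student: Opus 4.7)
My plan is to prove the bound via a single integration by parts followed by a monotonicity argument, crucially invoking the hypothesis $\log(A/a) \ge 1$ at the end to absorb the remainder term.

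First I would apply integration by parts with $du = 1 \cdot du$ and with $\sqrt{\log(A/u)}$ as the other factor. Computing
\[ \frac{d}{du}\sqrt{\log(A/u)} = -\frac{1}{2u\sqrt{\log(A/u)}}, \]
the integration by parts yields
\[ \int_0^a \sqrt{\log(A/u)}\,\rd u = \Bigl[u\sqrt{\log(A/u)}\Bigr]_{0}^{a} + \int_0^a \frac{1}{2\sqrt{\log(A/u)}}\,\rd u. \]
The boundary contribution at $u = a$ is $a\sqrt{\log(A/a)}$, which already matches (half of) the target. The boundary contribution at $u = 0^+$ vanishes since $u\sqrt{\log(A/u)} = \sqrt{u^2 \log(A/u)} \to 0$; this is the one small detail to verify carefully, but it follows from $u^2 \log(A/u) \to 0$ as $u \to 0^+$.

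Next, I would bound the remainder integral using monotonicity. On $u \in (0, a]$, we have $\log(A/u) \ge \log(A/a)$, hence $1/\sqrt{\log(A/u)} \le 1/\sqrt{\log(A/a)}$, giving
\[ \int_0^a \frac{\rd u}{2\sqrt{\log(A/u)}} \le \frac{a}{2\sqrt{\log(A/a)}}. \]
At this point I would invoke the hypothesis $\log(A/a) \ge 1$, which gives $1/\sqrt{\log(A/a)} \le \sqrt{\log(A/a)}$ (since both sides are positive and the product equals $1$). This step is essential: without the hypothesis, the remainder term could dominate when $a$ is close to $A$.

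Combining the two pieces yields
\[ \int_0^a \sqrt{\log(A/u)}\,\rd u \le a\sqrt{\log(A/a)} + \frac{a}{2}\sqrt{\log(A/a)} = \frac{3a}{2}\sqrt{\log(A/a)} \le 2a\sqrt{\log(A/a)}, \]
which is the desired bound. The main (and only real) obstacle is making sure the boundary term at zero is handled cleanly and that the condition $\log(A/a) \ge 1$ is invoked at precisely the right step to convert the remainder into a multiple of the leading term; the rest is elementary calculus.
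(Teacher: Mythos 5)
Your proof is correct. It is worth noting how it relates to the paper's argument: the paper proves the lemma by introducing the auxiliary function $f(a) = 2a\sqrt{\log(A/a)} - \int_0^a \sqrt{\log(A/u)}\,\rd u$, checking that $f(0)=0$ and that $f'(a) = \sqrt{\log(A/a)} - 1/\sqrt{\log(A/a)} \ge 0$ precisely when $\log(A/a)\ge 1$, and supplementing this with a lower bound on $f(a)/a$ near $a=0$ that is obtained by the very same integration by parts you perform. Your route dispenses with the auxiliary function and the derivative-sign argument entirely: you apply the integration-by-parts identity directly at the point $a$, bound the remainder $\int_0^a \tfrac{1}{2}\bigl(\log(A/u)\bigr)^{-1/2}\rd u$ by monotonicity of the integrand, and then invoke $\log(A/a)\ge 1$ to convert $1/\sqrt{\log(A/a)}$ into $\sqrt{\log(A/a)}$. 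This is both shorter and slightly sharper --- you obtain the constant $3/2$ rather than $2$ --- and your handling of the boundary term at $u=0^+$ (via $u^2\log(A/u)\to 0$) and of the hypothesis $\log(A/a)\ge 1$ is exactly where it needs to be. The only point to be explicit about, which you implicitly use, is that $\log(A/u) \ge \log(A/a) \ge 1 > 0$ for all $u\in(0,a]$, so the integrand and its reciprocal square root are well defined on the whole domain of integration; with that noted, the argument is complete.
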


\begin{proof}[Proof of Lemma \ref{lem:int}]
    Define
    \[ f(a) \doteq \begin{cases}
        2a\sqrt{\log{A \over a}} - \int_{0}^{a}\sqrt{\log{A \over u}}\rd u, & a > 0; \\
        0, & a = 0.
    \end{cases} \]
    Then $f$ is continuous at $0$. Moreover, for $a > 0$, we have
    \[ f'(a) = \sqrt{\log{A \over a}} - {1 \over \sqrt{\log{A \over a}}}, \]
    which is nonnegative if $\log(A/a) \ge 1$. 
    As $a \to 0^+$, we further have
    \[ \begin{aligned}
        & {f(a) \over a} = 2\sqrt{\log{A \over a}} - {1 \over a}\int_{0}^a\sqrt{\log{A \over u}}\rd u \\
        & = \sqrt{\log{A \over a}} - {1 \over 2a}\int_{0}^a{1 \over \sqrt{\log{A \over u}}}\rd u &&(\text{by integration-by-part}) \\
        & \ge \sqrt{\log{A \over a}} - {1 \over 2\sqrt{\log{A \over a}}}; \\
        & \liminf_{a \to 0^+}{f(a) \over a} \ge +\infty.
    \end{aligned}  \]
    Therefore, for any $a \ge 0$, we have $f(a) \ge  0$, which concludes the proof.
\end{proof}

\end{document}